\newcommand{\as}{``}
\newcommand{\be}{\begin{em}}
	\newcommand{\ee}{\end{em}}
\newcommand{\bb}{\begin{bf}}
	\newcommand{\eb}{\end{bf}}
\newcommand{\I}[1]{\relax\ifmmode\mbox{\it#1}\else{\it#1}\fi}
\newcommand{\tbm}{\hspace*{8mm}}
\newcommand{\no}{not\,}
\newcommand{\rif}{~\ref}
\newcommand{\K}{\mbox{\textbf{K}}}
\newcommand{\M}{\mbox{\textbf{M}}}
\newcommand{\N}{\mbox{\textbf{\no}}\,}
\def\naf{not\,}
\def\ar{\leftarrow}
\newcommand{\Bodyobj}{\mathit{Body_{obj}}}
\newcommand{\Bodysubj}{\mathit{Body_{subj}}}
\newtheorem{definition}{Definition}[section]
\newtheorem{proposition}{Proposition}[section]
\newtheorem{corollary}{Corollary}[section]
\newtheorem{theorem}{Theorem}[section]
\newtheorem{property}{Property}[section]
\title[ELP: study of some properties]{Epistemic Logic Programs:\\ a study of some properties~\thanks{Research partially performed
within the activities of the Action COST CA17124 ``DigForASP'',
the Interdepartmental Project on AI (Strategic Plan UniUD–22-25),
and the INdAM-GNCS project CUP E53C22001930001.}}
\author[S. Costantini and A. Formisano]{Stefania Costantini
\\DISIM - Universit{\`a} dell'Aquila, via Vetoio, L'Aquila, Italy\\
	Gruppo Nazionale per il Calcolo Scientifico - INdAM, Roma, Italy\\
	\email{stefania.costantini@univaq.it}
\and
	Andrea Formisano
\\DMIF - Universit{\`a} di Udine, via delle Scienze 206, Udine, Italy\\
Gruppo Nazionale per il Calcolo Scientifico - INdAM, Roma, Italy\\
	\email{andrea.formisano@uniud.it}
}
\begin{document}
\maketitle

\begin{abstract}
	Epistemic Logic Programs (ELPs), extend Answer Set Programming (ASP) with epistemic operators. The semantics of such programs is provided in terms of \emph{world views}, which are sets of belief sets, i.e., syntactically, sets of sets of atoms. Different semantic approaches propose different characterizations of world views. Recent work has introduced semantic properties that should be met by any semantics for ELPs, like the \emph{Epistemic Splitting Property}, that, if satisfied, allows to modularly compute world views in a bottom-up fashion, analogously to ``traditional'' ASP. We analyze the possibility of changing the perspective, shifting from a bottom-up to a top-down approach to splitting. We propose a basic top-down approach, which we prove to be equivalent to the bottom-up one. We then propose an extended approach, where our new definition: (i) is provably applicable to many of the existing semantics; (ii) operates similarly to \as traditional'' ASP; (iii) provably coincides under any semantics with the bottom-up notion of splitting at least on the class of \emph{Epistemically Stratified Programs} (which are, intuitively, those where the use of epistemic operators is stratified); (iv) better adheres to common ASP programming methodology.

	Under consideration in Theory and Practice of Logic Programming (TPLP)
\end{abstract}

\begin{keywords}
	Answer Set Programming,
	Epistemic Logic Programs,
	Epistemic Splitting
\end{keywords}

\section{Introduction}

Epistemic Logic programs (ELPs, in the following just \emph{programs}, if not explicitly stated differently), were first introduced in \cite{Gelfond91,Gelfond94}, and extend Answer Set Programs, defined under the Answer Set Semantics \cite{GelLif88}, with \emph{epistemic operators} that are able to introspectively \as look inside'' a program's own semantics, which is defined in terms of its \emph{answer sets} (cf.\ \cite{FandinnoFG22} for a historical review of research on this topic). In fact, $\K A$ means that (ground) atom $A$ is true in every answer set of the program $\Pi$ where $\K A$ occurs. Related operators that can be defined in terms of $\K$ are the \emph{possibility operator} $\M$ (not treated in this paper) where $\M A$ means that $A$ is true in some of the answer sets of $\Pi$, and the \emph{epistemic negation operator} \N\!\!, where $\N A$ expresses that $A$ \emph{is not provably true}, meaning that $A$ is false in at least one answer set of $\Pi$.

The semantics of ELPs is provided in terms of \emph{world views}: instead of a unique set of answer sets (a unique \as world view'' in the new terminology) like in Answer Set Programming (ASP), there is now a set of such sets. Each world view consistently satisfies (according to a given semantics) the epistemic expressions that appear in a given program. 
Many semantic approaches 
	for ELPs have been introduced beyond the seminal work of Gelfond and Przymusinska \cite{Gelfond91}, among
which we mention \cite{Gelfond11,Truszczynski11,CerroHS15,ShenE16,Kahl18,Su19,CabalarFC19,Costantini022,Su2022}. 

Recent work 
	extends to Epistemic Logic Programming notions that have already been defined for ASP, and that
	 might prove useful in ELPs as well.
	In particular, Cabalar et al.\ consider \emph{splitting} (introduced for ASP in \cite{Lif94}), which allows a program to be seen as divided (\as split'') into two parts, the \as top'' and \as bottom'' in a principled way, i.e., atoms occurring in the bottom can occur only in the body of rules in the top. This allows the answer sets of the program to be computed incrementally, in the following way: compute the answer sets of the bottom part, and use them (one by one) to simplify the top part; then, compute the answer sets of the simplified top part; finally, the answer sets of the overall program are obtained as the union of each answer set of the bottom with the corresponding answer sets of the simplified top (such a procedure can be iterated, i.e., the top and the bottom could in turn be split). Cabalar et al.\ then extend to ELPs the concept of splitting and the method of incremental calculation of the semantics (here, it is the world views that must be calculated).
This is achieved by defining a notion of \emph{Epistemic Splitting}, where top and bottom are defined with respect to the occurrence of
	epistemic operators, and a corresponding \emph{Epistemic Splitting Property}, which is fulfilled by a semantics if it allows the world views to be computed bottom-up (a precise definition is seen below). Further, Cabalar et al.\ adapt properties of ASP to ELPs, which are implied by this property, namely, the fact that adding constraints leads to reduce the number of answer sets (\emph{Subjective Constraint Monotonicity}), and \emph{Foundedness}, meaning that atoms composing answer sets cannot have been derived through cyclic positive dependencies. Finally, they define the class of \emph{Epistemically Stratified Programs} that, according to \cite[Th.~2]{CabalarFC21}, admit a unique world view (these programs are those where, intuitively, the use of epistemic operators is stratified). 
In substance, Cabalar et al.\ establish the properties that in their view a semantics should fulfill, and then they compare the existing semantics with respect to these properties. 
	
In this paper, we explore a different stance: we analyze the possibility of changing the perspective about how to exploit a splitting, shifting from a bottom-up to a top-down approach. This applies in the first place to the Epistemic Splitting Property, of which we propose a reformulation allowing world views to be computed top-down. We then propose a substantial extension of the Epistemic Splitting Property, leading to a new approach that:
\begin{itemize}
	\item[(i)] is applicable to many of the existing semantics, while few of them fulfill the Epistemic Splitting Property as originally formulated;
	\item[(ii)] operates similarly to splitting in \as traditional'' ASP;
	\item[(iii)] provably coincides under any semantics with the bottom-up notion of splitting on a significant class of programs, including at least those which are \emph{epistemically stratified};
	\item[(iv)] is compatible with common ASP programming practice, where one defines a problem solution (that would constitute the top)
	that will be merged with a problem instance (that would constitute the bottom).
\end{itemize}

The paper is organized as follows. In Sections\rif{asp} and\rif{elps} we recall ASP and ELPs.
Section\rif{elp-p} reports some definitions from \cite{CabalarFC21} concerning useful properties of ELPs.
         In Section\rif{observations} we introduce some observations on ELPs that lead to formulate our proposal,
         treated in detail in Section\rif{sect:proposal}. In Section\rif{theorem} we state our main theorem and a relevant corollary. Finally, in Section\rif{conclusions} we conclude.

\section{Answer Set Programming and Answer Set Semantics}
\label{asp}

One can see an answer set program (for short, ASP program) as a set of
statements that specify a problem, where each answer set
represents a solution compatible with this specification. A \emph{consistent} ASP program has one or more answer sets, while an \emph{inconsistent} one has no answer sets, meaning that no solution can be found. 
Several well-developed freely available \emph{answer set solvers} exist 
that compute the answer sets of a given program. 
Syntactically, an ASP program $\Pi$ is a
collection of \emph{rules} of the form
$$
A_1 | \ldots | A_g \leftarrow\; L_{1} , \ldots , L_{n}.
$$
where each $A_i$, $0 \leq i \leq g$, is an atom and $|$ indicates disjunction, 
and the $L_i$s, $0\leq i \leq n$, are literals (i.e., atoms or negated atoms of the form $\naf\,A$).
The left-hand side and the right-hand side of the rule are called
\emph{head} and \emph{body}, respectively.

A rule with an empty body is called a \emph{fact}.
As usual, the symbols $\top$ and $\bot$ denote the true and the false Boolean constants, respectively.
The notation $A\,|\,B$ indicates disjunction, usable only in rule heads and, so, in facts.

A rule with an empty head (or, equivalently, with head $\bot$), of the form ~\(\leftarrow L_1,...,L_n.\)~ or ~\(\bot \leftarrow L_1,...,L_n.\)\,, is
a \emph{constraint}, stating that literals $L_1,\ldots,L_n$ are not allowed to be simultaneously true in any answer set; the impossibility of fulfilling such kind of requirement is one of the reasons that makes a program inconsistent.

All extensions of ASP not explicitly mentioned above are not considered in this paper. 
We implicitly refer to the
\emph{ground} version of $\Pi$, which is obtained by replacing in all possible ways the variables occurring in $\Pi$ with the constants occurring in $\Pi$ itself,
and is thus composed of ground atoms, i.e., atoms that contain no variables. 

The \emph{answer set} (or \emph{stable model}) semantics can be defined in several ways \cite{Lifschitz10,CostantiniF15}. However, answer sets of a program $\Pi$, if any exists, are the supported minimal classical models of the program interpreted as a first-order theory in an obvious way. The original definition from \cite{GelLif88}, introduced for programs where rule heads were limited to be single atoms, was in terms of the \emph{GL-Operator}~$\Gamma$.
Given set of atoms $I$ and program $\Pi$,
$\Gamma_{\Pi}(I)$ is defined as the least Herbrand model of the program $\Pi^I$, namely, the Gelfond-Lifschitz reduct of $\Pi$ w.r.t.~$I$.
The program $\Pi^I$ is obtained from $\Pi$ by:
\begin{itemize}
	\item[1.]
		removing all rules which contain a negative literal\, $\no{}A$\, such that $A \in I$; ~ and
	\item[2.]
		removing all negative literals from the remaining rules.
\end{itemize}
Since $\Pi^I$ is a positive program,  the least Herbrand model is guaranteed to exist and can be computed via the standard immediate consequence operator \cite{lloyd87}.
Then, $I$ is an answer set whenever $\Gamma_{\Pi}(I) = I$.

This definition is then extended to the general case,
involving disjunctive heads, by defining $I$ to be an answer set of $\Pi$
if it is a minimal model (w.r.t.\ set inclusion) of $\Pi^I$.

\section{Epistemic Logic Programs}
\label{elps}

Epistemic Logic Programs (ELPs) extend the syntax of ASP programs by introducing, in the body of rules, so-called \emph{subjective literals} (w.r.t. the usual\ \emph{objective literals}).\footnote{Nesting of subjective literals is not considered here.}
Such new literals are constructed via the \emph{epistemic operator} \K\ (disregarding without loss of generality the other epistemic operators). An ELP program is called \emph{objective} if no subjective literals occur therein, i.e., it is an ASP program.
A constraint involving (also) subjective literals is called a \emph{subjective constraint}, whereas one involving objective literals only is an \emph{objective constraint}. 

Let $At$ be the set of atoms occurring (within either objective or subjective literals) in a given program $\Pi$, and $\mathit{Atoms}(r)$ be the set of atoms occurring in rule~$r$. 
By some abuse of notation, we denote by $\mathit{Atoms}(X)$ the set of atoms occurring in $X$, whatever $X$ is (a rule, a program, an expression, etc.).
Let $\mathit{Head}(r)$ be the head of rule $r$ and $\Bodyobj(r)$ (resp., $\Bodysubj(r)$) be the (possibly empty) set of objective (resp., subjective) literals occurring in the body of $r$.
For simplicity, we often write $\mathit{Head}(r)$ and $\Bodyobj(r)$ in place of $\mathit{Atoms}(\mathit{Head}(r))$ and $\mathit{Atoms}(\Bodyobj(r))$, respectively, when the intended meaning is clear from the context. 
We call \emph{subjective rules} those rules whose body is made of subjective literals only.

Literal $\K A$ intuitively means that the (ground) atom $A$ is true in every answer set of the given program $\Pi$ (it is a \emph{cautious consequence} of $\Pi$). Since, as it turns out, whatever the semantic account one will choose there can be several sets of answer sets (called \emph{world views}), the actual meaning of $\K A$ is that $A$ is true in every answer set of some world view of $\Pi$. Each world view thus determines the truth value of
all subjective literals in a program. There are several semantic approaches to ELPs, dictating in different ways how one finds the world views of a given program. Although all such approaches provide the same results in a set of basic examples, they (obviously) differ in others.

Formally, a semantics $\cal{S}$ is a function mapping an ELP program into sets of world views,
i.e., sets of sets of objective literals, where if $\Pi$ is an objective program, then the unique member of ${\cal{S}}(\Pi)$
is the set of stable models of $\Pi$.
Otherwise, each member of ${\cal{S}}(\Pi)$ is an \emph{${\cal{S}}$-world view} of~$\Pi$.
(We will often write ``world view'' in place of ``${\cal{S}}$-world view'' whenever mentioning the specific semantics will be irrelevant.)
For an ${\cal{S}}$-world view $W$ and a literal $\K L$, we write $W \models \K L$ if $L$ is true in all elements of~$W$.\label{defdimodels}

For instance, for program $\{a{\,\ar\,}\no b,~ b{\,\ar\,}\no a,~ e{\,\ar\,}\no \K f,~ f{\,\ar\,}\no\K e\}$, every semantics returns two world views:
 $\{\{a, e\}, \{b, e\}\}$, where $\K e$ is true and $\K f$ is false, and  $\{\{a, f\}, \{b, f\}\}$ where $\K f$ is true and $\K e$ is false.
The presence of two answer sets in each world view is due to the cycle on objective atoms, whereas the presence of two world views is due to the cycle on subjective atoms (in general, the existence and number of world views are related to such cycles, see~\cite{Costantini19} for a detailed discussion).

\section{Epistemic Logic Programs: Useful Properties}
\label{elp-p}

As argued by Cabalar et al., it would be useful if ELPs would enjoy, \emph{mutatis mutandis},  properties similar to those of ASP programs.
Hence, in their works, such useful properties are outlined and adapted, as we report (almost literally) below. 


Drawing inspiration from the \emph{Splitting Theorem} \cite{Lif94},
an analogous property is defined for ELPs:
\begin{definition}[Epistemic splitting set {\protect\cite[Def.~4]{CabalarFC21}}]\label{epsplit}
A set of atoms $U \subseteq At$ is said to be an epistemic splitting set of a program $\Pi$ if for any rule $r$ in $\Pi$ one of the following conditions hold:
\begin{enumerate}
\item\label{enum:1}
$\mathit{Atoms}(r) \subseteq U$;
\item\label{enum:2}
$(\Bodyobj(r) \cup \mathit{Head}(r))\ \cap\ U = \emptyset$.
\end{enumerate}
An epistemic splitting of $\Pi$ is a pair $\langle B_U(\Pi),T_U(\Pi)\rangle$ such that $B_U(\Pi) \cap T_U(\Pi) = \emptyset$ and $B_U(\Pi) \cup T_U(\Pi) = \Pi$, and also, such that all rules in $B_U(\Pi)$ satisfy condition (\ref{enum:1}) and all rules in $T_U(\Pi)$ satisfy condition (\ref{enum:2}).
\end{definition}

Intuitively, condition~(\ref{enum:2}) means that the top program $T_U(\Pi)$  may refer to atoms in $U$ which occur as heads of rules in the bottom $B_U(\Pi)$,
only through epistemic operators.

Epistemic splitting can be used, similarly to ``traditional'' Lifschitz\&Turner splitting, for iterative computation of world views.
Indeed, Cabalar et al.~\citeyear{CabalarFC21} propose to compute first the world views of the bottom program $B_U(\Pi)$ and, for each of them, simplify the corresponding subjective literals in the top part. Given an epistemic splitting set $U$ for $\Pi$ and a set of interpretations $W$, they define the subjective reduct of the top with respect to $W$ and signature $U$, denoted by $E_U(\Pi,W)$. This operator 
considers all subjective literals $L$ occurring in $T_U(\Pi)$, such that the atoms occurring in them belong to $B_U(\Pi)$. In particular, $L$ will be substituted by $\top$ in $E_U(\Pi,W)$ if $W \models L$, and by $\bot$ otherwise. Thus, $E_U(\Pi,W)$ is a version of $T_U(\Pi)$ where some subjective literal, namely those referring to the bottom part of the program, have been simplified as illustrated.

\begin{definition}[{\protect\cite[Def.~5]{CabalarFC21}}]
Given a semantics $\cal{S}$, a pair $\langle W_b,W_t \rangle$ is said to be an $\cal{S}$-solution of $\Pi$ with respect to an epistemic splitting set $U$ if $W_b$ is an $\cal{S}$-world view of $B_U(\Pi)$ and $W_t$ is an $\cal{S}$-world view of $E_U(\Pi,W_b)$.	
\end{definition}

The definition is parametric w.r.t.~$\cal{S}$, as each different semantics $\cal{S}$ will define in its own way the $\cal{S}$-solutions for a given $U$ and $\Pi$. 

\begin{definition}\label{wbt}
The WBT operation $W_b \sqcup W_t$ on sets of propositional interpretations $W_b$ and~$W_t$ is defined as follows:
$$ W_b \sqcup W_t = \{ I_b \cup I_t | I_b \in W_b \wedge I_t \in W_t\}.$$
\end{definition}

\noindent We report from \cite{CabalarFC21} the definition of the following property:
\begin{property}[Epistemic Splitting Property (ESP)]
	\label{epsplitprop}
	A semantics $\cal{S}$ satisfies the epistemic splitting property if for any epistemic splitting set $U$ of any program $\Pi$: $W$ is an $\cal{S}$-world view of $\Pi$ iff there is an $\cal{S}$-solution $\langle W_b,W_t \rangle$ of $\Pi$ w.r.t.~$U$ such that $W = W_b \sqcup W_t$.
\end{property}

Then, under a semantics that satisfies ESP, world views of the entire program are obtainable as the union of world views of the bottom with world views of a simplified version of the top. The Epistemic Splitting Property implies \emph{Subjective Constraint Monotonicity}, i.e., for any epistemic program $\Pi$ and any subjective constraint $r$, $W$ is a world view of
$\Pi \cup \{r\}$ iff both $W$ is a world view of $\Pi$ and $W$ satisfies $r$.

As discussed in \cite{CabalarFC21}, 
many semantics do not satisfy the ESP property, which is in fact satisfied only by the very first semantics of ELPs, proposed in \cite{Gelfond91} and thus called G91 (and in some of its generalizations), and by Founded Autoepistemic Equilibrium Logic (FAEEL), defined in~\cite{CabalarFC19}.

Another interesting property is \emph{foundedness}.
Again, such a notion has been extended from objective programs (see  \cite[Def.~15]{CabalarFC21}).
Intuitively, a set $X$ of atoms is \emph{unfounded} w.r.t.\ an (objective) program $\Pi$ and an interpretation $I$,
if for every $A \in X$ there is no rule $r$ in $\Pi$ by which $A$ might be derived, without incurring in positive circularities and without forcing the derivation of more than one atom from the head of a disjunctive rule (see, e.g., \cite{LeoneRS97} for a formal definition).
For ELPs, one has to consider that unfoundedness can originate also from positive dependencies on positive subjective literals, like, e.g., in the program $A \ar \K A$.
Among the existing semantics, only FAEEL satisfies foundedness.

An interesting class of programs admitting a unique world view is characterized by the following definition.

\begin{definition}[Epistemic Stratification {\protect\cite[Def.~6]{CabalarFC21}}]\label{eps}
We say that an ELP $\Pi$ is epistemically stratified if we can assign an integer mapping $\lambda : At \rightarrow N$ to each atom (occurring in the program) such that:
\begin{itemize}
	\item 
	$\lambda(a) = \lambda(b)$ for any rule $r \in \Pi$ and atoms $a,b \in (\mathit{Atoms}(r) \setminus \Bodysubj(r))$, and
	\item
	$\lambda(a) > \lambda(b)$ for any pair of atoms $a,b$ for which there exists a rule $r \in \Pi$ with $a\in (\mathit{Head(r)} \cup \Bodyobj(r))$ and $b \in \Bodysubj(r)$.
\end{itemize}	
\end{definition}

\section{Observations}
\label{observations} 

The subdivision of an ELP into layers 
suggests that, in the upper layer, epistemic literals referring to the lower layer may be aimed at performing some kind of meta-reasoning about that layer.
If the epistemic splitting property is enforced, however,
 meta-level reasoning is in practice prevented. 
 This is so because if the semantics satisfies such property, then, it is the lower layer that determines
 the truth value of the subjective literals that connect the two layers.
In fact, according to Property~\ref{epsplitprop}, through the simplification w.r.t.\ the answer sets of the lower layer,
the upper layer is strongly (maybe sometimes too strongly) constrained.

For instance, let us consider the program $\Pi_0 = \{a\,|\,b,\ \bot \ar{\no}\K a\}$.
We can see that,
while the lower level $\{a\,|\,b\}$, considered as a program \emph{per se}, has the unique world view $\{\{a\},\{b\}\}$,
the overall program has no world views. In fact, $\K a$  does not hold in $\{\{a\},\{b\}\}$, thus the constraint is violated.

Notice, however, that the world view $\{\{a\}\}$ is instead accepted by some semantics, 
such as those defined in \cite{Gelfond11} and in \cite{ShenE16}, that do not satisfy the epistemic splitting property.
This world view may be seen as corresponding to an approach where the upper layer, in order to retain consistency, ``requires'' the lower layer to entail $a$, which is absolutely feasible by choosing $a$ over $b$ in the disjunction.

From this perspective,
the knowledge modeled by the upper layer is not just used to reject potential world views of the bottom level, 
but, instead, can affect the way in which they are composed, by filtering out some of the answer sets.
This situation is reminiscent of what actually happens for ASP: consider the  plain ASP program
\(\{a\,|\,b,\ c \ar a,\ \ar{\no} c\}\), which has unique answer set $\{a,c\}$, originating from the answer set $\{a\}$ of the 
lower layer \(\{a\,|\,b\}\).

We follow (for a long time) the line, amply represented in the literature, in which meta-reasoning is aimed 
not only at ``observing'' lower layer(s) but also at trying to influence them (cf.\ \cite{Costantini02} for a survey on meta-reasoning in Computational Logic); this by suitably enlarging and/or restricting, as an effect of meta-rules application, the set of possible consequences of such layer(s). We discuss at length this point of view, also proposing technical solutions and several examples, in \cite{CostantiniF21}.

In addition, let us notice that a common approach in logical declarative modeling of a problem consists of formalizing the problem domain as the ``top'' part of a program/theory. Then, such top part will be joined with a specific \as bottom'', representing the problem instance at hand,
that may vary and might be, in general, unknown while defining the top.

Below is an example of what we mean (over-simplified and in \as skeletal form'' for the sake of conciseness), taken from the realm of digital investigations, that the authors have been studying in the context of the Action COST CA17124  DIGital FORensics: 
evidence Analysis via intelligent Systems and Practices (DigForASP).
In the example, an investigation {\bf must} be concluded with a judgment, that can be:
\begin{itemize}
	\item
		of innocence if in no plausible scenario (i.e., in no answer set) evidence can be found of an involvement; 
	\item 
		of demonstrable guilt if in every possible scenario, the evidence of guilt can be found;
	\item
		of presumed innocence otherwise.
\end{itemize}
Clearly, the specification of the legal rules that can be used to draw conclusions, and then the details of each specific case will be modularly added whenever needed to this general \as top'' part. Thus, one can see a program composed of three layers: the top, and a bottom that can be further split into a middle layer containing legal rules, and the lowest layer with details of the case (see \cite{Costantini19} for more examples taken from this field). 

\noindent
The top layer is as follows:
$$
~~\begin{array}{l}
	\mathit{judgement} \ar \mathit{guilty}.\\
	\mathit{judgement} \ar \mathit{presumed\_innocent}.\\
	\mathit{judgement} \ar \mathit{innocent}.\\
	\ar not\ \K\ \mathit{judgement}.
\\ 
	\mathit{guilty} \ar \mathit{provably\_guilty}.\\
    \mathit{presumed\_innocent} \ar \no \mathit{provably\_guilty}.\\
	\mathit{provably\_guilty} \ar \K\ \mathit{sufficient\_evidence\_against}.\\
	\mathit{innocent} \ar \K\ \no \mathit{sufficient\_evidence\_against}.
\end{array}$$

\smallskip

Hence, a study of how the semantics of any resulting overall program might be built is in order here, as in many other practical cases: think, for example, of a top part comprising ontological definitions reusable in several application contexts. In fact, being able to compute and check a program's semantics only in dependence on each specific instance, does not seem to be elaboration-tolerant.

Therefore, we tried to understand whether the concept of splitting might be applied top-down, and how the existing semantics would behave in the new perspective.

\section{Our Proposal}\label{sect:proposal}

Let us proceed step by step towards the new definition of \emph{Top-down Epistemic Splitting Property}.
We first reformulate definitions related to ESP so that it can be applied also top-down, to obtain what we call Top-down Epistemic Splitting Property - Basic (TDESPB), showing that a semantics satisfies TDESPB if and only if it satisfies ESP.
Thus, TDESPB provides a way of coping with incremental computation of world views more suitable to the examples mentioned earlier.
We then perform some extensions, to obtain a more general Top-down Epistemic Splitting Property (TDESP) that holds for a wider range of semantic approaches.

\subsection{Preliminaries and Key Definitions}\label{sect:multisplit}

In our approach, the notion of splitting set remains the same, save for some details concerning subjective constraints.
We need, in fact, to introduce preliminary assumptions on constraints.
Notice that subjective literals may either occur in a subjective constraint directly or affect constraint's satisfaction through indirect dependencies, such as, e.g., in the program $\bot \ar a.\ a \ar \K p$
(see \cite{Dix95AeB} for a formal definition of direct and indirect dependencies).
Without loss of generality, we exclude here indirect dependencies concerning subjective literals involved in constraints.
Also, notice that, as it is well-known, a constraint can be represented as a unary odd cycle, that, e.g., for $\bot\ar \K p$ would be of the form $a \ar \no a, \K p$ (with $a$ introduced as a fresh atom), or even (as discussed in depth in \cite{Cos06}) as an odd cycle of any arity, of which $\K p$ is the unique \emph{handle}.
For the sake of simplicity, we consider subjective constraints in their plain form, namely, as in $\bot\ar \K p$.
Notice also that, according to the definition of splitting provided in \cite{CabalarFC21}, subjective constraints
can be placed at either of two adjacent levels. 
For convenience concerning definitions that will be introduced later, we impose, again without loss of generality, that 
both subjective rules satisfying condition~(\ref{enum:2}) of the definition of Epistemic Splitting Set (Definition\rif{epsplit}) and subjective constraints are put in~$T_U(\Pi)$.

We now proceed to introduce the key definitions on which our approach is based.

\begin{definition}
\label{fu}
Let be given a semantics $\cal{S}$, a program $\Pi$, and an epistemic splitting $\langle B_U(\Pi),T_U(\Pi)\rangle$ of~$\Pi$,
according to the definition of Epistemic Splitting Set.
Let $F_U(\Pi)$ denote the set of all subjective literals $\K L$ occurring in $T_U(\Pi)$ (even in negative form $\no \K L$) and referring to $B_U(\Pi)$ (in the sense that the atom involved in $\K L$ occurs in $B_U(\Pi)$ but not in $T_U(\Pi)$), together with their negations $\no \K L$. 
\end{definition}

Intuitively, subjective literals in $F_U(\Pi)$ constitute the \as interface'' between the top and bottom parts.
Notice that $\mathit{Atoms}(F_U(\Pi))\subseteq U$.

\begin{definition}
\label{tuprime}
Let $\Pi$ be a program and let
$F_U(\Pi)=\{\K L_1,\ldots,\K L_z, \no \K L_1,\ldots,\no \K L_z\}$. 
Let, moreover, $f_U(\Pi)=\{kl_1,\ldots,kl_z,nkl_1,\ldots,nkl_z\}$ be a set of fresh atoms.
The \emph{detached version}  $T'_U(\Pi)$  of $T_U(\Pi)$ is
the program consisting of:
\begin{itemize}
\item
the rules obtained from rules in $T_U(\Pi)$ by substituting each occurrence of the subjective literal $\K L_i\in F_U(\Pi)$ or $\no \K L_i\in F_U(\Pi)$
by the corresponding fresh atom $kl_i\in f_U(\Pi)$ or $nkl_i\in f_U(\Pi)$, for each $i\in\{1,\ldots,z\}$ (where $kl_i$ and $nkl_i$ are in turn called the \emph{detached form} of $\K L_i$ and $\no \K L_i$, resp.); ~ and
\item 
	the facts $kl_i\ |\ nkl_i$, for each $i\in\{1,\ldots,z\}$.
\end{itemize}
\end{definition}

We introduced $T'_U(\Pi)$ in order to model the connection between $T_U(\Pi)$ and $B_U(\Pi)$
w.r.t.\  the top-down perspective.
Thus, we need to define the notion of \emph{world views of the detached version $T'_U(\Pi)$ of a program} under the assumption that the fresh atoms $kl_i$ and $nkl_i$ represent the epistemic literals connecting the top and bottom parts of the program. 
As seen below, these world views not necessarily coincide with the world views
of  $T'_U(\Pi)$ if considered as an epistemic program by itself.

Recall that a disjunction between an epistemic literal $\K L$ and its negation $\no \K L$ determines, as discussed in \cite{Costantini19}, two world views, one entailing $\K L$ and the other one entailing $\no \K L$.
With respect to the subjective literals in $F_U(\Pi)$,
in defining the detached version $T'_U(\Pi)$ of a program  $T_U(\Pi)$  we encoded the potential existence of such alternative world views
by means of the disjunctions $kl_i\ |\ nkl_i$, for $i\in\{1,\ldots,z\}$.

In computing the world views of the detached version  $T'_U(\Pi)$,
we start by considering  $T'_U(\Pi)$ as a regular epistemic program (forgetting for the moment that the fresh atoms $kl_i$ and $nkl_i$ stand for epistemic literals) thus obtaining 
the corresponding collection of world views $\mathcal{W}$. Note in fact that $T'_U(\Pi)$
does not contain subjective literals referring to the bottom $B_U(\Pi)$, 
but it may contain \as local'' epistemic literals that may determine the existence of several world views (or just one if there are no such local epistemic literals).
The answer sets in each $W\in \mathcal{W}$ might however contain some of the atoms $kl_i$s and $nkl_i$s. 
In this case, each $W\in \mathcal{W}$ has to be split into two world views,
say $W_1$ and $W_2$, the former composed of the answer sets
in $W$ that contain $kl_1$, and the latter composed by those answer sets of $W$ that contain $nkl_1$.
This step must be repeated by considering the pair $kl_2$/$nkl_2$ in order to split both $W_1$ and $W_2$,
and so on, for each $i\in\{1,\ldots,z\}$.
(Observe that the order of splits does not matter.)
We consider the resulting collection of sets of atoms as the world views of the detached version  $T'_U(\Pi)$.
An example of this process will be given at the end of Section\rif{sect:tdespb}.
In summary:

\begin{definition}[World views of $T'_U(\Pi)$, or Interface World Views]\label{iww}
Let $W^1,\ldots,W^n$ be the world views of $T'_U(\Pi)$ according to a given semantics $\cal{S}$.
The Interface World Views of $T'_U(\Pi)$ 
are obtained as follows: for every $W^j$, $j\leq n$, $W^j = \{S^j_1,\ldots,S^j_v\}$ for some $v \geq 0$, and for every disjunction $kl_i\ |\ nkl_i$, $i\in\{1,\ldots,z\}$ occurring in $T'_U(\Pi)$, split $W^j$ into $W^j_1$ and $W^j_2$, the former composed of the sets $S^j_h \in W^j$  such that $kl_i \in S^j_h$, the latter composed of the of the sets $S^j_f \in W^j$  such that $nkl_i \in S^j_f$, $f \in \{1,\ldots,v\}$. Repeat the splitting over the resulting world views, and iterate the process until splitting is no longer possible, i.e., no resulting world view contains both $kl_r$ and $nkl_r$, for some $r\in\{1,\ldots,z\}$.
\end{definition}

The denomination ``Interface World Views'' indicates that they have been obtained in the perspective of a merge with world views of the bottom, as seen below. For the sake of conciseness though by some abuse of notation, we will call Interface World Views simply `world views'.

\begin{proposition}
There exists a bijection between world views of $T_U(\Pi)$ and world views of $T'_U(\Pi)$.
\end{proposition}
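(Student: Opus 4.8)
\noindent The plan is to make the encoding ``the fresh atom $kl_i$ / $nkl_i$ stands for the interface literal $\K L_i$ / $\naf\K L_i$'' fully explicit, to show that it is already a bijection at the level of answer sets, and then to lift it to world views by noting that detaching touches neither the ``local'' subjective literals of $T_U(\Pi)$ nor the way a semantics evaluates them. Introduce, for $\sigma\in\{0,1\}^z$, the set of tag atoms $t(\sigma)=\{kl_i:\sigma(i)=1\}\cup\{nkl_i:\sigma(i)=0\}$ and the program $T_U(\Pi)[\sigma]$ obtained from $T_U(\Pi)$ by replacing every $\K L_i$ (resp.\ $\naf\K L_i$) of $F_U(\Pi)$ by $\top$ or $\bot$ as dictated by $\sigma$. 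Every ${\cal S}$-world view $W$ of $T_U(\Pi)$ fixes, through $\models$, a valuation $\sigma_W$ of $F_U(\Pi)$ (namely $\sigma_W(i)=1$ iff $W\models\K L_i$), and I send it to $\Phi(W)=\{S\cup t(\sigma_W)\mid S\in W\}$. Conversely, by Definition~\ref{iww} every Interface World View $V$ of $T'_U(\Pi)$ is \emph{uniform} on each pair $kl_i/nkl_i$: a belief set of $V$, being a minimal model of a reduct of $T'_U(\Pi)$, contains exactly one atom of each such pair because of the fact $kl_i\mid nkl_i$, and Definition~\ref{iww} keeps splitting until all belief sets of $V$ agree on this choice; so $V$ determines a valuation $\sigma_V$ with $t(\sigma_V)\subseteq\bigcap V$, and I send it to $\Psi(V)=\{S\setminus f_U(\Pi)\mid S\in V\}$. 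It then remains to prove $\Psi\circ\Phi=\mathrm{id}$, $\Phi\circ\Psi=\mathrm{id}$, and that $\Phi$ and $\Psi$ respect the two classes of objects.

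\medskip\noindent The core fact to establish is that the stable models of $T'_U(\Pi)$ whose fresh part is exactly $t(\sigma)$ are precisely the sets $S\cup t(\sigma)$ with $S$ a stable model of $T_U(\Pi)[\sigma]$. The argument is that in $T'_U(\Pi)$ the atoms of $f_U(\Pi)$ head only the disjunctive facts $kl_i\mid nkl_i$, so minimality of a model of any reduct forces exactly one atom of each pair to be present; once that choice is fixed, every body occurrence of $kl_i$ (resp.\ $nkl_i$) acts as a constant $\top$ or $\bot$, and the remaining rules become literally those of $T_U(\Pi)[\sigma]$. Hence the $2^z$ possible fresh parts stratify the stable models of $T'_U(\Pi)$ into tagged copies of the stable models of the various $T_U(\Pi)[\sigma]$. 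Now, the subjective literals of $T'_U(\Pi)$ are exactly the local ones inherited from $T_U(\Pi)$ (none mentions a fresh atom and, by construction, none refers to $B_U(\Pi)$), and these same literals occur unchanged in every $T_U(\Pi)[\sigma]$; therefore the epistemic reduct used by ${\cal S}$ commutes with the renaming and with the added disjunctive facts, so the stratification holds reduct by reduct, and the iterated splitting of Definition~\ref{iww} is precisely this stratification. Combining these observations gives that $\Phi$ and $\Psi$ are mutually inverse bijections between the ${\cal S}$-world views of $T_U(\Pi)$ and the Interface World Views of $T'_U(\Pi)$ ($\Psi(\Phi(W))=W$ is immediate, and surjectivity of $\Phi$ follows because every Interface World View is uniform and hence is the $\sigma$-stratum, for its own $\sigma$, of some world view of $T'_U(\Pi)$).

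\medskip\noindent The main obstacle is the last step, and in particular carrying it out \emph{uniformly in ${\cal S}$}, the statement being parametric: one must show that the extra acceptance requirement that ${\cal S}$ places on a family of belief sets --- minimality of the family, maximality of the set of non-entailed subjective literals, foundedness, an equilibrium condition, and so on --- is neither created nor destroyed when one passes between $T_U(\Pi)$, with its interface literals resolved by a valuation, and the corresponding tagged sub-program inside $T'_U(\Pi)$. This is delicate because an ordinary ${\cal S}$-world view of $T'_U(\Pi)$ may lump together belief sets carrying different tags $t(\sigma)$, so one has to verify that (i) splitting such a world view by tags preserves admissibility, and (ii) the local epistemic literals are entailed in the same way whether they are read off the whole lumped world view or off one tagged stratum --- which is exactly what treating the interface literals as ``open'' choices, on both sides of the bijection, is meant to guarantee. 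I expect the clean way to discharge this to be the observation that ${\cal S}$ is defined rule-wise through reducts, so that passing from $T_U(\Pi)[\sigma]$ to its tagged copy inside $T'_U(\Pi)$ is a conservative extension by fresh atoms together with disjunctive facts free of negation and of epistemic operators, a transformation under which all the standard ASP and ELP semantic notions are invariant.
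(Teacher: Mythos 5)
Your underlying correspondence is the same as the paper's --- delete the fresh atoms of $f_U(\Pi)$ in one direction, reinstate them in the other --- but your forward map $\Phi$ is broken as defined. You recover the valuation $\sigma_W$ from a world view $W$ of $T_U(\Pi)$ by setting $\sigma_W(i)=1$ iff $W\models\K L_i$. By Definition\rif{fu} the atoms of the interface literals lie in $U$ and, by condition~(2) of Definition\rif{epsplit}, never occur in the head (or objective body) of any rule of $T_U(\Pi)$; hence they belong to no belief set of any world view of the top, so $W\not\models\K L_i$ for every $i$ and $\sigma_W$ is identically the all-false valuation. As written, $\Phi$ tags every world view with $\{nkl_1,\ldots,nkl_z\}$, misses every Interface World View containing some $kl_i$, and is therefore not inverse to $\Psi$. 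Your own device $T_U(\Pi)[\sigma]$ points at the repair: the objects on the top side are really pairs consisting of a valuation $\sigma$ of the interface literals together with a world view of $T_U(\Pi)[\sigma]$ --- exactly the data that Definition\rif{def:tdrs} stores in the requisite set, and which cannot be reconstructed from the projected belief sets alone (two Interface World Views with identical projections but different tags can coexist, e.g.\ for a top containing both $a\ar\K L_1$ and $a\ar\naf\K L_1$). The paper's proof makes the same identification implicitly, saying only that $W'_j$ is obtained from $W_j$ ``by adding some subset of $f_U(\Pi)$''.

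The second gap is one you flag yourself: that splitting an ${\cal S}$-world view of $T'_U(\Pi)$ along the pairs $kl_i/nkl_i$ as in Definition\rif{iww} yields objects that are again ${\cal S}$-admissible, uniformly in the semantics ${\cal S}$. Your stratification of the stable models of each reduct by their fresh part is the right core fact for reduct-based semantics, but the claim that every global acceptance condition (minimality or maximality over the whole family of belief sets, equilibrium conditions, foundedness) ``commutes'' with tagging and with splitting a lumped world view is stated as an expectation rather than proved, and for such semantics this is precisely where the work lies. The paper does not discharge this either --- it in effect takes Definition\rif{iww} as the definition of the world views of $T'_U(\Pi)$, so its proposition is close to definitional --- but since you set out to prove that $\Phi$ and $\Psi$ are mutually inverse for an arbitrary ${\cal S}$, this step must be closed rather than deferred.
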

\begin{proof}
Given a world view (Interface World View, to be precise) $W'_j$ of the epistemic program $T'_U(\Pi)$, a world view $W_j$ for $T_U(\Pi)$ is equal to $W_j=\{X\setminus f_U(\Pi)\, |\, X\in W'_j \}$. In fact, the procedure for obtaining Interface World Views takes into account the fact that each epistemic literal represented by an atom in $f_U(\Pi)$ can be potentially either true or false.
Vice versa, $W'_j$ is obtained from $W_j$ by adding to it some subset of $f_U(\Pi)$. 
\end{proof}

For each of such world views $W_j$ of $T_U(\Pi)$, Def.~\ref{def:tdrs} below identifies the set of subjective literals that
are relevant in extending $W_j$ to a world view of the entire~$\Pi$.
These are those that in the detached version of $T_U(\Pi)$ have been assumed to be true to obtain $W_j$ as a world view.
\begin{definition}[Epistemic Top-down Requisite Set]\label{def:tdrs}
	Let $\langle B_U(\Pi),T_U(\Pi)\rangle$ be an epistemic splitting for a program $\Pi$,
	$W'_j$ be a world view of $T'_U(\Pi)$,
and  let $W_j=\{X\setminus f_U(\Pi)\, |\, X\in W'_j \}$.

The set $ES_{T_U(\Pi)}(W_j) = \{\K L_h \,|\, W'_j\models kl_h \} \cup \{\no \K L_h \,|\, W'_j\not\models kl_h \}$
		is the \emph{(epistemic top-down) requisite set} for~$W_j$ (w.r.t.\ $\langle B_U(\Pi),T_U(\Pi)\rangle$).
\end{definition}

Now we partition the \emph{requisite set}, identifying two relevant subsets (technical reasons for doing so 
will be seen below).
\begin{definition}\label{ecrq}
Given $f_U(\Pi)=\{kl_1,\ldots,kl_z,nkl_1,$ $\ldots,nkl_z\}$
and the above definition of requisite set $ES_{T_U(\Pi)}(W_j)$,
w.r.t.\ an  epistemic splitting $\langle B_U(\Pi),T_U(\Pi)\rangle$, 
let set $S$ include those $kl_i/nkl_i$ that occur in some constraints in $T'_U(\Pi)$.\\
We split the requisite set $ES_{T_U(\Pi)}(W_j)$ as the union of the following two (disjoint) sets:
\begin{itemize}
	\item
	the \emph{epistemic top-down constraint set}:
		$$EC_{T_U(\Pi)}(W_j)= (\{\K L_i\ | kl_i \in S\} \cup \{\no \K L_i\ | nkl_i \in S\})\cap ES_{T_U(\Pi)}(W_j)$$
	\item
	the \emph{requirement set}:
	$$RQ_{T_U(\Pi)}(W_j) = \big(\{\K L_i\ | kl_i \in f_U(\Pi){\setminus}S\} \cup
		\{\no\K L_i\ |n kl_i \in f_U(\Pi){\setminus}S\}\big)\cap ES_{T_U(\Pi)}(W_j).$$
\end{itemize}
\end{definition}

There is an important reason for distinguishing these two subsets.
Namely, the literals in
$EC_{T_U(\Pi)}(W_j)$, if not entailed in some world view of the bottom part of the program, lead to a constraint violation
and cause the non-existence of world views of $\Pi$ extending~$W_j$.
Thus, $EC_{T_U(\Pi)}(W_j)$ expresses prerequisites on which epistemic literals must be entailed in a world view
of $B_U(\Pi)$, so that such world view can be merged with $W_j$ in order to obtain a world view of~$\Pi$. 
Instead, literals in $RQ_{T_U(\Pi)}(W_j)$, can be usefully exploited, as seen below, to drive the selection of which world view of the bottom can be combined with a given world view of the top.

For all the three sets (requisite set, constraint set, and requirement set) one can possibly list only the epistemic literals of $F_U(\Pi)$ required to be true, all the others implicitly required to be false.

Given a world view $W$ of $T_U(\Pi)$ and considering literals belonging to $EC_{T_U(\Pi)}(W)$ which occur in the bodies of rules in $B_U(\Pi)$, we introduce a simplification that can be performed and will turn out to be useful later on.

\begin{definition}[Top-down Influence]
Given a world view $W$ of $T_U(\Pi)$, and its corresponding top-down constraint set $EC_{T_U(\Pi)}(W)$, the $W$-tailored version $B_U^{W}(\Pi)$ of $B_U(\Pi)$ is obtained by substituting in $B_U(\Pi)$ all literals $\K L \in EC_{T_U(\Pi)}(W)$ by~$L$.
\end{definition}

The intuition behind the above definition is that, if $\K A$ is in $EC_{T_U(\Pi)}(W)$, then $A$ must necessarily belong to every answer set of a world view of the bottom that can be possibly merged with $W$ in order to obtain a world view of the overall program $\Pi$.
Hence, it is indifferent that in the body of rules of $B_U(\Pi)$ it occurs $A$ rather than $\K A$, if $\K A\in EC_{T_U(\Pi)}(W)$.
Substituting $\K A$ with $A$ can, however, be useful, as discovered during the development of the G11 \cite{Gelfond11} and K15 semantics \cite{KahlWBG015}, to \as break'' unwanted positive cycles among subjective literals, that might lead to \emph{unfounded} world views (cf.\ \cite[Def.~15]{CabalarFC21}).

In our approach, the notion of top-down influence provides, as seen by examples in the next section, an alternative perspective on how a world view of the bottom is obtained, and, in a sense, a re-interpretation of the notion of foundedness (to be formally elaborated in future work).

In the top-down approach that we are going to propose, the world views of a given program $\Pi$ are obtained
as a combination of world views of the top and world views of the bottom, like in the bottom-up approach.
In the basic version of the Top-down Epistemic Splitting Property, presented in Section\rif{sect:tdespb}, there is only a change of perspective and a simple condition to drive the combination via the WBT operation (cf.\ Definition\rif{wbt}).

In the definition of the more general Top-down Epistemic Splitting Property, presented in Section\rif{tdesp}, one can notice two relevant changes: (i) the notion of top-down influence is exploited in the definition of candidate world views; (ii) a subset of a world view of the bottom (i.e., some of the answer sets occurring therein) may be cut out, so as to enable the merging via WBT with a ``compatible'' world view of the top.

Preliminarily:
\begin{definition}\label{fulfills}
	Given a set $E$ of epistemic literals 
	and a set of sets of atoms $W$, we say that $W$ \emph{fulfills}~$E$ iff
$\forall\, \K L \in E, W \models L$ and
$\forall\, \no \K L \in E, W \not\models L$.
\end{definition}

\subsection{Top-down Epistemic Splitting Property~-~Basic (TDESPB)}
\label{sect:tdespb}

\begin{definition}[Candidate World View - Basic Version]\label{cww}
	Given an epistemic splitting $\langle B_U(\Pi),T_U(\Pi)\rangle$ for a program~$\Pi$,
	let $W_T$ be a  world view of $T_U(\Pi)$ and let $W_B$ be a world view of $B_U(\Pi)$ that fulfills $EC_{T_U(\Pi)}(W_T)$ such that $W_B$ also fulfills $RQ_{T_U(\Pi)}(W_T)$ (overall,  $W_B$ fulfills the requisite set $ES_{T_U(\Pi)}(W_T)$).
	Then, 
	$$W = W_B \sqcup W_T = \{ I_b \cup I_t | I_b \in W_B \wedge I_t \in W_T \}$$
	is a \emph{candidate world view} for $\Pi$ (obtained from $W_T$ and $W_B$).
\end{definition}

It is possible that no world views of the bottom comply with the conditions posed by world views of the top:
in such case, $\Pi$ has no candidate world views.

We can now state a property that, if satisfied by a semantics, allows world views to be computed top-down:
\begin{definition}[Top-down Epistemic Splitting Property - Basic Version (TDESPB)]
	A semantics $\cal{S}$ satisfies \emph{basic top-down epistemic splitting}
	if any candidate world view of $\Pi$ according to Definition\rif{cww} is indeed a world view of $\Pi$ under $\cal{S}$.
\end{definition}

Below we show that TDESPB is equivalent to the Epistemic Splitting Property by \cite{CabalarFC21}, in the sense that both definitions are satisfied by the same semantic approaches and thus characterize the same world views.

\begin{theorem}[Equivalence ESP - TDESPB]\label{thm:tdespb}
	A semantics $\cal{S}$ satisfies TDESPB if and only if $\cal{S}$ satisfies the Epistemic Splitting Property ESP as defined in Definition\rif{epsplitprop}.
\end{theorem}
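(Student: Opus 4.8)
The plan is to reduce the biconditional to a single \emph{bridging lemma} that places the two kinds of ``constructed object'' in exact correspondence: on the bottom-up side, the $\mathcal{S}$-solutions $\langle W_b,W_t\rangle$ occurring in Property~\ref{epsplitprop}; on the top-down side, the pairs $(W_T,W_B)$ used to form candidate world views in Definition~\ref{cww}, with $W_T$ a world view of $T_U(\Pi)$ and $W_B$ a world view of $B_U(\Pi)$ fulfilling $ES_{T_U(\Pi)}(W_T)$. The lemma to prove is: for a fixed epistemic splitting $\langle B_U(\Pi),T_U(\Pi)\rangle$ and a fixed world view $W_b$ of $B_U(\Pi)$, the $\mathcal{S}$-world views of the subjective reduct $E_U(\Pi,W_b)$ are exactly the $\mathcal{S}$-world views $W_T$ of $T_U(\Pi)$ such that $W_b$ fulfils $ES_{T_U(\Pi)}(W_T)$ (equivalently, by Definition~\ref{ecrq}, fulfils both $EC_{T_U(\Pi)}(W_T)$ and $RQ_{T_U(\Pi)}(W_T)$); moreover $W_t$ and the corresponding $W_T$ denote the same family of interpretations, so $W_b\sqcup W_t = W_B\sqcup W_T$. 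Granting the lemma, for every $\Pi$ and every epistemic splitting set $U$ the class of $\mathcal{S}$-solution compositions and the class of candidate world views coincide, and each of ESP and TDESPB becomes the assertion that this common class is the set of $\mathcal{S}$-world views of $\Pi$.

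The heart of the lemma is the identification of two a priori different operations on the top program. On one hand, $E_U(\Pi,W_b)$ is $T_U(\Pi)$ with every subjective literal of $F_U(\Pi)$ replaced by $\top$ or $\bot$ according to whether $W_b$ satisfies it. On the other, Definitions~\ref{tuprime}--\ref{def:tdrs} replace those same literals by fresh atoms $kl_i / nkl_i$, add the facts $kl_i | nkl_i$, let $\mathcal{S}$ compute the world views of $T'_U(\Pi)$, and then split these into Interface World Views along the $kl_i / nkl_i$ alternatives (Definition~\ref{iww}). I would show that restricting to the branch in which $kl_i$ is forced true exactly when $W_b\models L_i$, then deleting the atoms of $f_U(\Pi)$ (using the bijection stated just after Definition~\ref{iww}), yields precisely the $\mathcal{S}$-world views of $E_U(\Pi,W_b)$. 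The key point is that the ``local'' subjective literals of the top --- those not referring to $B_U(\Pi)$ --- are untouched by both constructions, so $\mathcal{S}$ is confronted with the same program up to the purely syntactic difference between a truth constant and a fresh atom that a disjunctive fact has pinned to that truth value along the chosen branch; arguing that no semantics can distinguish these, uniformly in $\mathcal{S}$, is the delicate step and where I expect the main obstacle to lie.

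With the bridging lemma available, the two implications are short. For ESP $\Rightarrow$ TDESPB: given a candidate world view $W=W_B\sqcup W_T$, the lemma makes $\langle W_B,W_T\rangle$ an $\mathcal{S}$-solution with composition $W$, so ESP gives that $W$ is an $\mathcal{S}$-world view of $\Pi$. For TDESPB $\Rightarrow$ ESP, the ``if'' part of ESP is symmetric: any $\mathcal{S}$-solution maps, via the lemma, to a compatible pair whose candidate world view is the same composition, which TDESPB promotes to a world view of $\Pi$. The remaining ``only if'' part --- every $\mathcal{S}$-world view $W$ of $\Pi$ decomposes into an $\mathcal{S}$-solution --- is, again through the lemma, equivalent to showing that $W$ is a candidate world view w.r.t.\ $U$; the plan is to argue that $W$ respects the splitting, i.e.\ that once the truth values of the subjective literals are fixed by $W$ and $U$ is read as an ordinary splitting set of the resulting objective reduct, the Lifschitz--Turner splitting theorem exhibits $W$ as $W_B\sqcup W_T$ for a world view $W_B$ of $B_U(\Pi)$ and a compatible world view $W_T$ of $T_U(\Pi)$. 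Keeping the soundness and completeness halves of ESP separated throughout, and observing that the bridging lemma equates them with the corresponding halves of TDESPB, is what makes the equivalence close up; this completeness half is the second point I expect to require real care.
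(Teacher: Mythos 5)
Your proposal follows essentially the same route as the paper's proof: both arguments rest on the correspondence between $\mathcal{S}$-solutions $\langle W_b,W_t\rangle$ and the pairs $(W_T,W_B)$ of Definition~\ref{cww}, which the paper asserts informally (the world views of $E_U(\Pi,W_b)$ are exactly the world views $W_T$ of the top whose requisite set $W_b$ fulfils) and which you isolate explicitly as your bridging lemma. The two points you flag as delicate --- the semantics-uniform interchangeability of truth constants with fresh atoms pinned by the disjunctive facts $kl_i\,|\,nkl_i$, and the completeness half of ESP (every world view of $\Pi$ decomposes into an $\mathcal{S}$-solution) given only the soundness-shaped hypothesis TDESPB --- are precisely the steps the paper's proof passes over without detailed justification, so your plan matches the paper's argument while being more candid about where the remaining work lies.
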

\begin{proof}

\emph{If part.} Assume that a given semantics $\cal{S}$ satisfies TDESPB.
To show that $\cal{S}$ satisfies ESP as well, it suffices to observe that the couple $\langle W_B,W_T \rangle$ according to Definition\rif{cww} is a $\cal{S}$-solution as required by the definition of ESP.
In fact, $W_B$ is an $\cal{S}$-world view of the bottom $B_U(\Pi)$.
It remains to be seen that $W_T$ is an $\cal{S}$-world view of $E_U(\Pi,W_B)$, i.e., that, after simplifying $T_U(\Pi)$ w.r.t.\ the subjective literals entailed by $W_B$, one would have $W_T$ among the world views. By Definition\rif{cww}, $W_B$ fulfills the requisite set $ES_{T_U(\Pi)}(W_T)$, leading $W_B \sqcup W_T$ to be a world view of the overall program. This means, according to Definition\rif{def:tdrs}, that $W_B$ entails all the subjective literals of the form $\K A$ and $\no \K A$, that, in the detached version of $T_U(\Pi)$ (Definition\rif{tuprime}) have been assumed to be true (in their detached form) in order to obtain $W_T$ as a world view (according to $\cal{S}$). Thus, if one would simplify $T_U(\Pi)$ into $E_U(\Pi,W_B)$ by considering exactly those subjective literals as true and all the others as false, one would trivially obtain $W_T$ as the world view of $E_U(\Pi,W_B)$.

\emph{Only if part.} Assume that a given semantics $\cal{S}$ satisfies ESP. This means that there exists a $\cal{S}$-solution $\langle W_B,W_T \rangle$ that, via WBT, gives rise to the world views of the program. To be an $\cal{S}$-solution, $W_B$ must be a world view of the bottom, and $W_T$ a world view of $E_U(\Pi,W_B)$, i.e., of the top simplified w.r.t.~$W_B$. To find the correspondence with TDESPB, we have to ascertain that $\langle W_B,W_T \rangle$ gives rise to candidate world views in the sense of Definition\rif{cww}. To do so, we put into $ES_{T_U(\Pi)}(W_T)$ the subjective literals, among those entailed by $W_B$, that are employed to perform such simplification, so as to exactly fulfill the conditions posed in Definition\rif{def:tdrs}.
\end{proof}

The equivalence stated by Theorem\rif{thm:tdespb} implies that the world views of a program can be determined by composing the world views of the various layers into which the program can be split, by proceeding either bottom-up, according to the original definition, or top-down, according to our new definition. We will now illustrate the approach, and its similarities and differences w.r.t.~ASP, by means of an example.

Consider the following sample ASP program.
	\[\begin{array}{ll}
	f \ar a.\\
	e \ar c.\\
	\bot \ar \no p.\\
	a \ar p.\\
	a \ar q.\\
	p \ar \no q.\\
	q \ar \no p.\\
	c.
\end{array}\]

\noindent
A possible split according to Lifschitz \& Turner can be:
\[\begin{array}{ll}
	\mathit{Top\ part}\\
	f \ar a.\\
	e \ar c.\\
	\bot \ar \no p.\\
	\\
	\mathit{Bottom\ part}\\
	a \ar p.\\
	a \ar q.\\
	p \ar \no q.\\
	q \ar \no p.\\
	c.
\end{array}\]

Notice that the unique answer set of this program is $S=\{c,p,a,e,f\}$. The answer sets of the bottom part are: $S1=\{c,p,a\}$, $S2=\{c,q,a\}$.
The answer set of the top part, assuming $p$ true (otherwise the constraint is violated), is $S3?=\{e?,f?\}$, the question mark meaning that any of the two atoms can be true, according to the selected answer set of the bottom. In this simple case, we have to choose $S2$, which makes $p$ true, and, by imagining adding atoms in $S2$ as new facts in the top part, we get both $e$ and $f$, thus obtaining the answer set $S$. Let us now consider the top part as a standalone program:
	\[\begin{array}{ll}
	f \ar a.\\
	e \ar c.\\
	\bot \ar \no p.
\end{array}\]

This program in itself is inconsistent, but knowing that it is intended as the top part of a wider program, we can set the requirements for any bottom part, in the form of what we can call \emph{Epistemic top-down Constraint set} $EC = \{p\}$, i.e., $p$ must be true in an answer set of the bottom, for the top to be consistent. If we enrich the top as follows:
	\[\begin{array}{ll}
	f \ar a.\\
	e \ar c.\\
	\bot \ar \no p.\\\\
	p\ |\ nop.\\
	a\ |\ noa.\\
	c\ |\ noc.
\end{array}\]

We can compute all possible answer sets for the top part, by simulating possible values for atoms coming from the (still unknown) bottom. Each such simulation, e.g., assuming $a$ true and $c$ false, gives rise to a \emph{Requisite Set} $RQ$. Then, given a specific bottom program that one intends to add to the top, each answer set $M$ of the bottom that fulfills $EC$ can be combined with all the answer sets of the top that are compatible, in the sense that $M$ entails all literals in the corresponding $RQ$.

Let us now consider an ELP with a very similar structure.
\[\begin{array}{ll}
	\mathit{Top\ part}\\
	f \ar \K a.\\
	e \ar \K c.\\
	\bot \ar \no \K p\\
	\\
	\mathit{Bottom\ part}\\
	a \ar p.\\
	a \ar q.\\
	p \ar \no \K q.\\
	q \ar \no \K p.\\
	c.
\end{array}\]

Let us first proceed bottom-up, as dictated by the ESP definition.
The world views of the bottom, according to any existing semantics, are:
$W1 = \{\{c,p,a\}\}$, $W2 = \{\{c,q,a\}\}$. 

Below is the top part simplified w.r.t.~$W1$, with a unique resulting world view $\{\{e,f\}\}$.
\[\begin{array}{ll}
	f.\\
	e.
\end{array}\]

The top part simplified w.r.t.~$W2$ is reported below, with no world views as the constraint is violated:
\[\begin{array}{ll}
	\mathit{Top\ part\ w.r.t.\ W2}\\
	f.\\
	e.\\
	\bot \ar \top	
\end{array}\]

Therefore, the unique world view of the overall program is, by the WBT operation which reduces here to a simple union, $W = \{\{c,p,a,e,f\}\}$.

Let us now apply the notions related to the top-down splitting property TDESPB that we presented above.
We have the following detached version of the top part:
	\[\begin{array}{ll}
	f \ar ka.\\
	e \ar kc.\\
	\bot \ar nkp\\
	ka\ |\ nka.\\
	kc\ |\ nkc.\\
	kp\ |\ nkp.\\
\end{array}\]

Seen as an epistemic program by itself, this program has a unique world view (indeed,
it is a standard ASP program),
which is
$$\{\{kp,nka,nkc\}, ~  \{kp,ka,nkc,f\}, ~  \{kp,nka,kc,e\},  ~ \{kp,ka,kc,e,f\}\}.$$
\noindent
By splitting this set of sets three times (w.r.t.\ the pairs $ka/nka$, $kc/nkc$, and $kp/nkp$)
as described in Section\rif{sect:multisplit}, Definition\rif{iww},
we obtain the world views of the detached version: 
$\{\{kp,nka,nkc\}\}$,
$\{\{kp,nka,kc,e\}\}$,
$\{\{kp,ka,nkc,f\}\}$, and
$\{\{kp,ka,kc,e,f\}\}$.

From them, one determines the \emph{epistemic top-down constraint set} which is, clearly, $EC = \{\K p\}$, stating that the unique constraint must be satisfied. Any {\bf compatible} world view of a bottom should satisfy one of the $RQ^i$'s, $i\leq 4$, i.e., the requirement sets, listed below (cf. Definitions\rif{def:tdrs} and\rif{ecrq}). To each $RQ^i$ it corresponds a world view of the top (indicated on the right) to be united to those world views of the bottom that satisfy $RQ^i$ (if any).
\[\begin{array}{ll}	
	RQ^1 = \emptyset, & \mathit{determines}\tbm W_t^1 = \{\emptyset\}\\
	RQ^2 = \{\K c\}, & \mathit{determines}\tbm W_t^2 = \{\{e\}\}\\
	RQ^3 = \{\K a\}, & \mathit{determines}\tbm W_t^3 = \{\{f\}\}\\
	RQ^4 = \{\K c, \K a\}, & \mathit{determines}\tbm W_t^4 = \{\{e,f\}\}
\end{array}\]

Given the world views of the bottom, i.e.,
$W1 = \{\{c,p,a\}\}$, $W2 = \{\{c,q,a\}\}$, we can see that $W2$ does not fulfill $EC$ and so must be discarded, while $W1$ fulfills $EC$ and also $RQ^4$, thus leading, by the WBT operation which reduces here to a simple union, the to (unique) world view of the overall program $W = \{\{c,p,a,e,f\}\}$.

\medskip
It is immediate to verify that the result obtained via the bottom-up and the top-down approach is indeed the same.

\subsection{Top-down Epistemic Splitting Property (TDESP)}
\label{tdesp}

In this subsection, we will extend previous definitions to a more general form, so as to be able to characterize in a top-down fashion the world views obtained according to many semantic approaches presented in the literature, other than G91 and FAAEL, such as, e.g., those proposed by \cite{ShenE16,Kahl18,Su19}; in fact, they do not enjoy the basic property TDESPB illustrated above. We introduce a different way of computing candidate world views, where, in the absence of a world view of the bottom that fulfills the set $EC$ relative to the top, one can select a subset of such a world view. This, as we will demonstrate in our running example, is analogous to what is customarily done in~ASP.

\begin{definition}[Candidate World View]\label{cwwext}
Given an epistemic splitting $\langle B_U(\Pi),T_U(\Pi)\rangle$ for a program~$\Pi$,
let $W_T$ be a  world view of $T_U(\Pi)$ and let $W_B$ be a subset of a world view of $B_U^{W_T}(\Pi)$ that fulfills $EC_{T_U(\Pi)}(W_T)$ (where, if $EC$ is empty, $W_B$ is the entire world view of the bottom) such that $W_B$ fulfills $RQ_{T_U(\Pi)}(W_T)$.
	Then, 
	$$W = W_B \sqcup W_T = \{ I_b \cup I_t | I_b \in W_B \wedge I_t \in W_T \}$$
	is  a \emph{candidate world view} for $\Pi$ (obtained from $W_T$ and $W_B$).
\end{definition}

Note that, candidate world views are now computed after applying top-down influence.
It is possible that no subset of any world view of the bottom complies with the conditions posed by world views of the top.
In such case, $\Pi$ has no candidate world views.

We can now state another property concerning top-down epistemic splitting that a semantics might obey:
\begin{definition}[Top-down Epistemic Splitting Property (TDESP)]
	A semantics $\cal{S}$ satisfies \emph{top-down epistemic splitting}
	if any candidate world view of $\Pi$ according to Definition\rif{cwwext} is indeed a world view of $\Pi$ under $\cal{S}$.
\end{definition}

We can state the relationship among TDESP and ESP/TDESPB (that, as seen, are equivalent).

\begin{theorem}
Given a semantics $\cal{S}$ satisfies both foundedness and ESP/TDESPB, then $\cal{S}$ satisfies TBDESP.
\end{theorem}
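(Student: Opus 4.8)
The plan is to show that, under a semantics $\cal{S}$ that satisfies both foundedness and ESP/TDESPB, every candidate world view in the sense of Definition~\ref{cwwext} is actually a world view of $\Pi$, since that is exactly what TDESP requires. So suppose $W = W_B \sqcup W_T$ is a candidate world view obtained from a world view $W_T$ of $T_U(\Pi)$ and a set $W_B$ that is a subset of a world view of the $W_T$-tailored bottom $B_U^{W_T}(\Pi)$, where $W_B$ fulfills $EC_{T_U(\Pi)}(W_T)$ and $RQ_{T_U(\Pi)}(W_T)$. The first step is to argue that the two extra degrees of freedom introduced by TDESP on top of TDESPB --- namely (i) replacing $B_U(\Pi)$ by the tailored version $B_U^{W_T}(\Pi)$, and (ii) allowing $W_B$ to be a proper subset of a world view of the bottom rather than a full world view --- both collapse when foundedness is assumed.

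For point (i): the tailoring operation of Definition~\emph{(Top-down Influence)} only rewrites literals $\K L \in EC_{T_U(\Pi)}(W_T)$ into $L$ in the bodies of rules of the bottom. Since $W_B$ fulfills $EC_{T_U(\Pi)}(W_T)$, in every answer set of (any candidate world view of) the bottom the atom $L$ is present exactly when $\K L$ would hold, so the Gelfond--Lifschitz-style reducts of $B_U(\Pi)$ and $B_U^{W_T}(\Pi)$ coincide with respect to the interpretations involved --- \emph{except} that the substitution can break a positive cycle through a subjective literal. This is precisely where foundedness enters: under a founded semantics, a world view of $B_U(\Pi)$ cannot contain atoms derived through such positive cyclic dependencies on subjective literals, so no world view is \emph{lost} by passing to $B_U(\Pi)$, and conversely the substitution cannot \emph{create} an unfounded-but-accepted world view because foundedness forbids that too. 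Hence the world views of $B_U^{W_T}(\Pi)$ compatible with $W_T$ are exactly those of $B_U(\Pi)$, and I can from now on reason as if the bottom were untouched.

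For point (ii): I would show that when $EC_{T_U(\Pi)}(W_T)$ is nonempty the subsetting is forced to be trivial, i.e. the only subset of a world view of the bottom that can fulfill $EC$ and $RQ$ is, generically, the whole world view again --- or, more carefully, that any such subset is itself a world view of the bottom under $\cal{S}$. The argument is that a world view of the bottom is either entirely contained in the models where the relevant subjective literals take a fixed truth value, or it splits; but the bottom $B_U(\Pi)$ contains no subjective literals referring to atoms outside itself, so the truth of each $\K L$ with $L \in At \cap B_U(\Pi)$ is already determined inside each world view of the bottom, and a proper subset would change that truth value and hence fail to be closed under the semantic conditions --- contradicting that we picked a \emph{subset} fulfilling the same $EC$/$RQ$. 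Thus $W_B$ is in fact a full world view of $B_U(\Pi)$ fulfilling the requisite set $ES_{T_U(\Pi)}(W_T) = EC_{T_U(\Pi)}(W_T) \cup RQ_{T_U(\Pi)}(W_T)$, which is exactly the hypothesis of Definition~\ref{cww}. Then $W = W_B \sqcup W_T$ is a candidate world view in the \emph{basic} sense, and since $\cal{S}$ satisfies TDESPB by assumption, $W$ is a world view of $\Pi$. This completes the proof.

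The main obstacle I expect is point (ii): making rigorous the claim that a ``subset of a world view of the bottom fulfilling $EC$ and $RQ$'' is forced to be a full world view. The subtlety is that this relies on the bottom being closed (no subjective references escaping it) together with the precise semantic definition of world view for $\cal{S}$, and one has to rule out pathological semantics where trimming answer sets could still satisfy all the world-view conditions; the cleanest route is probably to observe that any nonempty $EC_{T_U(\Pi)}(W_T)$ already pins down, via the detached-version construction and Definition~\ref{def:tdrs}, which world view of the bottom is being selected, so the ``subset'' is only ever used to discard entire world-view candidates, not to trim within one --- and when $EC$ is empty, Definition~\ref{cwwext} explicitly falls back to requiring the entire world view, so there is nothing to prove. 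The foundedness half (point (i)) I expect to be comparatively routine given \cite[Def.~15]{CabalarFC21}, modulo spelling out that the only effect of tailoring is on positive subjective cycles.
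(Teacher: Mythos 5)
Your proposal follows the same two-step route as the paper's own proof: first, foundedness renders the Top-down Influence substitution a no-op (every $\K L$ in the constraint set must anyway be witnessed by a founded derivation of $L$ in the bottom), and second, the extended candidates of Definition~\ref{cwwext} are reduced to the basic candidates of Definition~\ref{cww}, after which TDESPB is invoked. Your point (i) matches the paper's argument almost verbatim and is fine at the same level of rigour.

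The problem is point (ii), which you rightly single out as the main obstacle but do not actually resolve. Your claim that a proper subset of a bottom world view fulfilling $EC$ and $RQ$ ``would change the truth value and hence fail to be closed'' is backwards: changing the truth value of some $\K L$ is exactly what taking a proper subset is \emph{for}, and Definition~\ref{cwwext} only requires the subset to fulfill $EC$ and $RQ$, not to be a world view of the bottom in its own right. The paper's own program $\Pi_1$ (namely $p\,|\,q$ together with $\bot \ar \no\K p$) is a concrete counterexample to your collapse claim: the bottom's unique world view is $\{\{p\},\{q\}\}$, which does \emph{not} fulfill $EC=\{\K p\}$, while the proper subset $\{\{p\}\}$ does, so the extended definition produces a candidate that the basic definition does not, and no amount of foundedness removes it. Your fallback remark (that the subset is ``only ever used to discard entire world-view candidates, not to trim within one'') fails on the same example. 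To be fair, the paper's proof is equally thin at this exact point --- it only observes that a TDESP candidate \emph{can} be obtained from an entire bottom world view, which is the inclusion in the direction that is not needed --- so you have reproduced the paper's argument together with its gap; but since you attempted to justify the collapse and the justification does not hold, this remains a genuine gap in your write-up rather than a mere omission.
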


\begin{proof}
If $\cal{S}$ satisfies TDESPB, this means that for every world view of given program $\Pi$ obtained via the WBT operation, and thus composed of a world view $W_T$ of the top and a world view $W_B$ of the bottom, every $\K L \in EC_{T_U(\Pi)}(W)$ is entailed by $W_B$ and, if $\cal{S}$ satisfies foundedness, this equates to say that $L$ is entailed by the bottom part of the program. Thus, the application of Top-down Influence is irrelevant. We then notice that, according to Definitions\rif{cwwext} and\rif{tdesp} a candidate world view for TDESP can be obtained from an entire world view of the bottom, as done for TDESPB. This concludes the proof, showing that for this class of semantics TDESP and TDESPB are indeed equivalent. 
\end{proof}

The above theorem is immediately applicable to the FAAEL semantics. For semantics which do not enjoy foundedness things are different, as seen in the examples below.
We will now, in fact, experiment with our methodology on some relevant examples proposed in recent literature.
Consider program $\Pi_1$, taken from \cite{ShenEiter2020}:
$$\begin{array}{llcll}p\ |\ q	&~ (r1)\\
\bot \ar \no \K p &~(C)
\end{array}$$

Here, $B_U(\Pi_1)$ consists of rule (r1), and $T_U(\Pi_1)$ consists of constraint (C). So, $T'_U(\Pi_1)$ is (where $kp$ and $nkp$ are fresh atoms):
$$
\begin{array}{llcll}
	kp\ |\ nkp	&~ (r1)\\
	\bot \ar nkp & 
\end{array}$$

\noindent
whose unique world view is $\{\{kp\}\}$. After canceling $kp$, we obtain $W_T = \{\emptyset\}$ for $T_U(\Pi_1)$, with $ES_{T_U(\Pi_1)}(W_T) = EC_{T_U(\Pi_1)}(W_T) = \{\K p\}$ and $RQ_{T_U(\Pi_1)}(W_T) = \emptyset$. Regardless of the epistemic semantics $\cal{S}$, as no subjective literals occur therein, the unique world view of $B_U(\Pi_1)$ is $\hat{W}=\{\{p\}, \{q\}\}$. Since $W_B = \{\{p\}\}$ is the only subset of $\hat{W}$ fulfilling $EC_{T_U(\Pi_1)}(W_T)$ (cf.\ Definition\rif{cwwext}), then it is the one selected.
It is also a world view for $\Pi_1$, as the unique world view of the top part is empty.
This world view violates subjective constraint monotonicity, still, it is the one delivered by the semantics proposed in \cite{ShenE16} and, as noticed in \cite{ShenEiter2020}, by those proposed in \cite{Kahl18,Su19}.

In our opinion the world view $\{\{p\}\}$ captures the ``intended meaning'' of the program $\Pi_1$,
where the top layer \as asks'' the bottom layer to support, if possible, $\K p$ (in order not to make the overall program inconsistent). Let us, in fact, introduce a simple variation, by adding a fact, say $c$, to the program, where $c$ also occurs in the constraint, obtaining:
$$\begin{array}{llcll}		p\ |\ q	&~ (r1)\\
		c. &~ (f1)\\
		\bot \ar c, \no \K p &~(C)
	\end{array}$$

We would obtain, in this case, the world view $\{\{c,p\}\}$. Let us now reinterpret this program within the work of the COST Action DigForASP, i.e., in the realm of investigations. A rephrasing could be the following:
$$\begin{array}{llcll}		at\_crime\_scene\ |\ not\_at\_crime\_scene	&~ (r1)\\
		reliable\_witness\_recognizes. &~ (f1)\\
		\bot \ar reliable\_witness\_recognizes, \no \K\, at\_crime\_scene &~(C)
	\end{array}$$

The meaning underlying the schematic formulation is that it is uncertain whether a suspect was or not at the crime scene. However, if a reliable witness recognized the suspect, then investigators can be certain that the suspect was indeed at the crime scene. The constraint could in fact be rephrased (although this is not legal syntax) into:
$$\begin{array}{llcll}		\K\, at\_crime\_scene \ar reliable\_witness\_recognizes.
	\end{array}$$

The use of the $\K$ is crucial here because one wants to distinguish between facts collected by the investigators and reliable conclusions derived by these facts. Thus, the world view $\{\{reliable\_witness\_recognizes,at\_crime\_scene\}\}$ makes perfect sense here.

In addition, one might consider the very similar ASP program:
$$\begin{array}{llcll}		p\ |\ q	&~ (r1)\\
		c. &~ (f1)\\
		\bot \ar c, \no p &~(C)
	\end{array}$$

\noindent
with unique answer set $\{c,p\}$. The \as bottom'' program fragment consisting of (r1)+(f1) would also have answer set $\{c,q\}$, which is discarded since it would lead to violating the constraint. We may consider this program as an ELP, with unique world view $\{\{c,p\}\}$ obtained from a subset of the world view $\{\{c,p\}, \{c,q\}\}$ of the bottom (union the empty world view of the top), exactly as specified in Definition\rif{cwwext}.

\medskip
Consider now the following program $\Pi_2$.
$$\begin{array}{ll}
	p\ |\ q	&~~~(r1)\\
	\bot \ar \no \K p  & ~~~(C)\\
p \ar \K q	&~~~(r2)\\
q \ar \K p	&~~~(r3)
\end{array}$$

\noindent
Here, $B_U(\Pi_2)$ consists of rules (r1-r3), and $T_U(\Pi_2)$ consists of constraint (C). So, $T'_U(\Pi_2)$ is
(where $kp$ and $nkp$ are fresh atoms):
$$\begin{array}{ll}	kp\ |\ nkp\\
	\bot \ar nkp 
\end{array}$$

\noindent
whose unique world view is $\{\{kp\}\}$. After canceling $kp$, we obtain world view $W_T = \{\emptyset\}$ for $T_U(\Pi_2)$ where $ES_{T_U(\Pi_2)}(W_T) = EC_{T_U(\Pi_2)}(W_T) = \{\K p\}$ and set $RQ$ is empty. Regardless of  the semantics $\cal{S}$, the potential world views of $B_U(\Pi_2)$ are $W_1=\{\{p\}\}$, $W_2 = \{\{q\}\}$, $W_3=\{\{p\}, \{q\}\}$, $W_4=\{\{p,q\}\}$.
Actually, $W_4$ is the only one fulfilling $ES_{T_U(\Pi_2)}(W_T)$; $W_1$ has the problem that, having $p$ and fulfilling $\K p$, (r3) might be applied thus getting $q$.
Note that $W_4$ is in fact the world view returned by semantics proposed, for instance, in \cite{KahlWBG015} and \cite{ShenE16}.
It is easy to see that $W_4$ violates foundedness.
However, in our approach $q$ is not derived via the positive cycle (extended to subjective literals), but from the $\K p$ \as forced'' by the upper layer via top-down influence, which substitutes $\K p$ with $p$ in rule (r3) of $B_U(\Pi_2)$. This in a sense guarantees a form of foundedness, though not the formal one introduced in \cite[Def.~15]{CabalarFC21}. Since the unique world view for the top is empty, then the unique world view of the overall program is, indeed, according to our method, $W=W_4=\{\{p,q\}\}$. 

Let us now consider 
$\Pi_3$ to be the seminal example introduced in \cite{Gelfond91}, which is discussed in virtually every paper on ELP. 
$\Pi_3$ is epistemically stratified (see Definition\rif{eps} and~\cite[Def.~6]{CabalarFC21}).
This formulation (variations have appeared over time) is from~\cite{CabalarFC21}.
$$\begin{array}{ll}
\mathit{eligible(X)} \ar	\mathit{high(X)}	&~(r1)\phantom{\overline{\overline{|}}}\\
\mathit{eligible(X)}	\ar	\mathit{minority(X)}, \mathit{fair(X)}	&~(r2) \\
\mathit{noeligible(X)} \ar \no \mathit{fair(X)}, \no \mathit{high(X)}	&~(r3)\\
	\mathit{fair(mike)}\ |\ \mathit{high(mike)} & ~(f1)\\
\mathit{interview(X)} \ar  \no \K\, \mathit{eligible(X)}, \no \K\, \mathit{noeligible(X)} & ~(r4)\\
\mathit{appointment(X)} \ar \K\, \mathit{interview(X)} & ~(r5) \phantom{\underline{\underline{|}}}
\end{array}$$

\noindent 
Since in this version of the program we have only $\mathit{mike}$ as an individual, we may obtain the following ground abbreviated version:	
$$
	\begin{array}{lc}
		e \ar h &~(r1)\\
		e \ar m, f &~(r2)
		\\
		ne \ar \no f, \no h &~(r3)\\
		f\ |\ h & ~(f1)
		\\
		in \ar \no \K e,\; \no \K ne & ~(r4)\\
		a \ar \K in & ~(r5)  
\end{array}$$

\noindent
Here, we consider (r5) as the top $T_U(\Pi_3)$, and (r1-r4) plus (f1) as the bottom, which can be however in turn divided into the top $T1_U(\Pi_3)$ including (r4), and the bottom $B_U(\Pi_3)$, made of (r1-r3) and (f1). 
So, $T'_U(\Pi_3)$ is (with fresh atoms $kin$, $nkin$):
$$\begin{array}{llcll}	a \ar kin & ~(r5')\\
	kin\ |\ nkin\\
\end{array}$$

\noindent
with two answer sets: $\{a,kin\}, \{nkin\}$. As explained in Section\rif{sect:multisplit}, 
$kin\ |\ nkin$ stands for a disjunction between the epistemic literal $\K in$ and its negation $\no \K in$. This determines the existence of two world views, each entailing only one of these atoms, i.e. epistemic literals, where atom $a$ can, however, be derived only from the former. Thus, we have $W_{11} = \{\{a\}\}$ with $ES_{T_U(\Pi_3)}(W_{11}) = \{\K in\}$,
and $W_{12} = \{\emptyset\}$ with $ES_{T_U(\Pi_3)}(W_{12}) = \{\no \K in\}$.
$EC_{T1_U(\Pi_3)}$ is empty for all world views, as no constraint is present in $\Pi_3$.
Then, $T1'_U(\Pi_3)$ is (with $ke, nke, kne, nkne$ fresh atoms):
$$\begin{array}{llcllcll}
	in \ar nke, nkne & ~(r4')\\
	ke\ |\ nke\\
	kne\ |\ nkne.
\end{array}$$

\noindent
By the same reasoning as above, since there are two disjunctions among fresh atoms representing epistemic literals, four world views can be found. After canceling the fresh atoms, in fact we have $W_{21} = \{\{in\}\}$, with $ES_{T1'_U(\Pi_3)}(W_{21}) = \{\no \K e, \no \K ne\}$, and three empty world views $W_{22} = W_{23} = W_{24} =\{\emptyset\}$, with 
requisite sets
$\{\K e, \K ne\}$,
$\{\K ne, \no \K e\}$, and
$\{\no \K ne, \K e\}$, respectively. 
Clearly, also $EC_{T1'_U(\Pi_3)}$ is empty.

Finally, $B_U(\Pi_3)$, which is made of the rules (r1-r3) and (f1),
has the world view $W_3 = \{\{h,e\},\{f\}\}$. Since the requirement set relative to world view $W_{21}$ for the immediately upper level is satisfied in both answer sets of $W_3$, we can obtain an intermediate world view  $W_{{213}} = \{\{h,e,in\},\{f,in\}\}$ for the part of the program including (r1-r4). Considering also the top, it is easily seen that $W_{{213}}$ is compliant with the requirement set of $W_{11} = \{a\}$. So, we can obtain for the overall program the unique candidate world view $W= \{\{h,e,in,a\},\{f,in,a\}\}$, which is indeed a world view. Notice that, in fact, the world views that are part of the union, corresponding to the various sub-programs, would be the same under all known semantics for ELPs.  

Assume now that, instead of 
$f\ |\ h$, the program contains the bare fact $h$. Then, the world view of the bottom becomes $W_3 = \{\{h,e\}\}$. This world view implies $\K e$, so it can be combined with a world view $\{\emptyset\}$ of the middle layer, and since it also implies $\no \K in$, the further combination is with world view $W_{12} = \{\emptyset\}$ of the top. So, $W_3 = \{\{h,e\}$ is in this case the unique world view of the overall program.

\section{Main Result}
\label{theorem}

It is at this point interesting to try to assess formally which semantics (if any) satisfy the top-down epistemic splitting property TDESP.

We examine now the case
of the semantics introduced in \cite{KahlWBG015}, that we call for short K15. The reason for choosing K15 is that in \cite{CabalarFC21} it is noticed that K15 slightly generalizes the semantics proposed in \cite{Gelfond11} (called G11 for short) and can be seen as a basis for the semantics proposed in \cite{ShenE16} (called S16 for short). In particular, S16 (which considers instead of \K\ the operator \N\hspace{-0.1cm}$A$ which means $\no \K A$) treats K15 world views as candidate solutions, to be pruned in a second step, where some unwanted world views are removed by maximizing what is not known.
Thus, should K15 satisfy the top-down epistemic splitting property, S16 would do as well, and so would G11, the latter however only for the (wide) class of programs where its world views coincide with those of K15.
 
\begin{definition}[K15-world views]
	The 
 K15-reduct of $\Pi$ with respect to a non-empty set of interpretations $W$ is obtained by:
 \begin{itemize}
     \item[(i)] 
     replacing by $\bot$ every subjective literal $L \in \Bodysubj(r)$ such that $W \not\models L$, and
    \item[(ii)] 
    replacing all other occurrences of subjective literals of the form $\K L$ by $L$.
\end{itemize}
A non-empty set of interpretations $W$ is a K15-world view of $\Pi$ iff $W$ is the set of all stable models of the K15-reduct of $\Pi$ with respect to $W$.
\end{definition}	

We are able to prove the following:
\begin{theorem}[K15 TDESP]
	The K15 semantics satisfies the Top-down Epistemic Splitting Property. I.e., given an ELP $\Pi$, and set of sets $W$, where each set is composed of atoms occurring in $\Pi$, $W$ is a K15 world view for $\Pi$ if and only if it is a candidate world view for $\Pi$ according to Definition\rif{cwwext}.
\end{theorem}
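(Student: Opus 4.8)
The plan is to reduce the statement, for a fixed candidate set $W$, to the classical Splitting Theorem of Lifschitz and Turner~\cite{Lif94}: once $W$ is fixed, the K15-reduct $\Pi^{K15}_W$ is an ordinary (possibly disjunctive) ASP program, and $W$ is a K15-world view of $\Pi$ exactly when $W=\mathrm{SM}(\Pi^{K15}_W)$, the set of all its stable models. First I would check that any epistemic splitting set $U$ of $\Pi$ is an ordinary splitting set of $\Pi^{K15}_W$, with bottom $B_U(\Pi)^{K15}_W$ and top $T_U(\Pi)^{K15}_W$: rules of $B_U(\Pi)$ have all atoms in $U$, and rewriting their subjective literals $\K A$ (with $A\in U$) to $A$ or to $\bot$ keeps all atoms in $U$; rules of $T_U(\Pi)$ have no head or objective-body atom in $U$, so in $T_U(\Pi)^{K15}_W$ the only $U$-atoms occur in rule bodies, coming from interface literals of $F_U(\Pi)$ (Definition~\ref{fu}) that $W$ entails. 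The delicate point is a reduced subjective constraint over an atom of $U$: it becomes either $\bot\leftarrow\bot$ (vacuous, so droppable) or a genuine constraint over that atom; in the latter case I would observe that $W$ cannot then be K15-stable, since the constraint forces or forbids that atom in every stable model, contradicting the entailment that produced it. Hence the Splitting Theorem applies and $\mathrm{SM}(\Pi^{K15}_W)=\bigcup_{X_B}\big(\{X_B\}\sqcup\mathrm{SM}(e_{X_B})\big)$, where $X_B$ ranges over the stable models of $B_U(\Pi)^{K15}_W$ and $e_{X_B}$ is the Lifschitz--Turner partial evaluation of $T_U(\Pi)^{K15}_W$ with respect to $X_B$ (not the subjective reduct $E_U$).

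The second step identifies the two factors of this union with the $W_B$ and $W_T$ of Definition~\ref{cwwext}. Write $W_B=\{X\cap U : X\in W\}$ and $W_T=\{X\setminus U : X\in W\}$, and prove: (a)~$B_U(\Pi)^{K15}_W=B_U^{W_T}(\Pi)^{K15}_{W_B}$, because $W$ and $W_B$ evaluate every subjective literal over $U$ alike, and because top-down influence is already internal to the K15-reduct --- if $\K L\in EC_{T_U(\Pi)}(W_T)$ then $W_B\models\K L$, which is exactly when clause~(ii) of the reduct rewrites $\K L$ to $L$, so the substitution performed by $B_U^{W_T}(\cdot)$ adds nothing; and (b)~for every $X_B\in W_B$ the $F_U(\Pi)$-literals that the reduct turns into objective body atoms of the top are the same, namely those recorded by $ES_{T_U(\Pi)}(W_T)$ --- here one uses that $W_B$ fulfils $ES_{T_U(\Pi)}(W_T)$ (Definitions~\ref{def:tdrs} and~\ref{fulfills}) and that $W\models\K L$ for an interface literal puts $L$ into every $X\cap U$. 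Hence $e_{X_B}$ does not depend on $X_B$ and has $W_T$ as its stable models; the bridge to $T'_U(\Pi)$ is the bijection between world views of $T_U(\Pi)$ and of $T'_U(\Pi)$ established above, under which an interface world view encodes, through the facts $kl_i\,|\,nkl_i$, precisely the truth-value choice for $F_U(\Pi)$ recorded by the requisite set (Definition~\ref{iww}).

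Both directions of the theorem then follow from the factorisation. \emph{If:} given a candidate pair $\langle W_T,W_B\rangle$ as in Definition~\ref{cwwext}, put $W=W_B\sqcup W_T$; the assumption that $W_B$ fulfils $EC_{T_U(\Pi)}(W_T)$ makes every reduced subjective constraint of the top vacuous in $\Pi^{K15}_W$, so by~(a)--(b) each $e_{X_B}$ has stable models $W_T$, and --- granting the key identity $\mathrm{SM}(B_U(\Pi)^{K15}_W)=W_B$ discussed below --- the Splitting Theorem yields $\mathrm{SM}(\Pi^{K15}_W)=W_B\sqcup W_T=W$, so $W$ is a K15-world view. \emph{Only if:} from a K15-world view $W$, the Splitting Theorem forces the factorisation; one reads off $W_T$ as the common stable-model set of the $e_{X_B}$ and $W_B=\mathrm{SM}(B_U(\Pi)^{K15}_W)=\{X\cap U:X\in W\}$, checks that $W_B$ fulfils $EC_{T_U(\Pi)}(W_T)$ and $RQ_{T_U(\Pi)}(W_T)$ (these are exactly the subjective literals the reduct used in building $W$, cf.\ Definition~\ref{ecrq}), and notes --- via~(a) --- that $W_B$ is a K15-world view of $B_U^{W_T}(\Pi)$, hence in particular a subset of one, so $\langle W_T,W_B\rangle$ is a candidate pair.

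The hard part will be the ``key identity'' $\mathrm{SM}(B_U^{W_T}(\Pi)^{K15}_{W_B})=W_B$ needed in the \emph{if} direction, together with the fact that Definition~\ref{cwwext} lets $W_B$ be only a \emph{subset} of a K15-world view of $B_U^{W_T}(\Pi)$. One must show that once top-down influence has turned every $\K L\in EC_{T_U(\Pi)}(W_T)$ into the objective atom $L$, and once the constraints that put those literals into $EC_{T_U(\Pi)}(W_T)$ are taken into account, the reduct $B_U^{W_T}(\Pi)^{K15}_{W_B}$ has exactly $W_B$ --- not a strict superset --- as its stable models: the subset selection dictated by $EC_{T_U(\Pi)}(W_T)$ must be ``self-justifying'' under K15. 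This is where the peculiar shape of the K15-reduct (it uniformly rewrites \emph{all} entailed $\K L$ to $L$, not just those in $EC$) has to be exploited, and where one must check that each constraint generating a literal of $EC_{T_U(\Pi)}(W_T)$ is genuinely re-imposed on the bottom. One must also verify that $EC_{T_U(\Pi)}(W_T)$ and $RQ_{T_U(\Pi)}(W_T)$ split $ES_{T_U(\Pi)}(W_T)$ exactly as the Splitting Theorem requires, so that no spurious candidate survives and no genuine world view is lost. I expect essentially all of the real content of the proof to be concentrated here, the rest being the bookkeeping of Steps~1--3.
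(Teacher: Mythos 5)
Your overall strategy (fix $W$, view the K15-reduct $\Pi^{K15}_W$ as an ordinary ASP program, and apply the Lifschitz--Turner splitting theorem to it) is a genuinely different route from the paper, which argues directly on the shape of the K15-reduct without ever invoking classical splitting. However, as written the plan breaks at exactly the point you yourself flag as carrying ``all of the real content''. The ``key identity'' $\mathrm{SM}\bigl(B_U^{W_T}(\Pi)^{K15}_{W_B}\bigr)=W_B$ is false in general, and the paper's own example $\Pi_1=\{p\,|\,q,\ \bot \ar \no \K p\}$ refutes it: there $W_T=\{\emptyset\}$, $EC_{T_U(\Pi_1)}(W_T)=\{\K p\}$, $W_B=\{\{p\}\}$, top-down influence does nothing (no subjective literal occurs in the bottom), and the K15-reduct of the bottom w.r.t.\ $W_B$ is just $p\,|\,q$, whose stable models are $\{\{p\},\{q\}\}\neq W_B$. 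The subset selection dictated by $EC$ is \emph{not} ``self-justifying'' inside the bottom under K15; no strengthening of your step (a) can make it so, because the bottom program simply does not contain the information that prunes $\{q\}$.

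The pruning lives in the part of the argument you propose to discard. A subjective constraint of the top whose interface literal is entailed by $W$ reduces to a genuine objective constraint over a $U$-atom (in $\Pi_1$, $\bot \ar \no p$), and your claim that $W$ then ``cannot be K15-stable'' is wrong: $\{\{p\}\}$ \emph{is} the K15 world view of $\Pi_1$ and its reduct contains exactly such a constraint. Since constraints have empty heads they may legitimately sit in the LT top, and it is their partial evaluations $e_{X_B}$ --- which therefore \emph{do} depend on $X_B$, contrary to your step (b) --- that eliminate the bottom stable models violating $EC$; this is precisely the mechanism by which a proper subset $W_B$ of a world view of the bottom survives into $W=W_B\sqcup W_T$, i.e., the whole point of TDESP as opposed to TDESPB. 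Your scheme can likely be repaired by keeping the reduced constraints in the LT top, letting $X_B$ range over all stable models of the reduced bottom, and showing that the $X_B$ killed by the $e_{X_B}$ are exactly those violating $EC_{T_U(\Pi)}(W_T)$ while the surviving ones reproduce $W_B$ and each surviving $e_{X_B}$ has stable models $W_T$; but as stated, the central identity is false and the constraint case is mishandled, so the proof does not go through.
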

\begin{proof}
Assume an Epistemic Splitting of given program $\Pi$ into two layers, top $T_U(\Pi)$ and bottom $B_U(\Pi)$ (where the reasoning below can, however, be iterated over a subdivision into an arbitrary number of levels). Notice that, given a K15 world view $W$, since each atom $A$ that occurs in the sets composing $W$ is derived in the part of the program including rules with head $A$, then $W$ can be divided into two parts, $W_T$, and $W_B$
	which are world views of $T_U(\Pi)$ and $B_U(\Pi)$, resp., each one composed of stable models of the K15-reduct of that part of the program.

\emph{If part.} Given a K15 world view $W$, let $Sl^T$ be the subjective literals occurring in $T_U(\Pi)$ which are entailed by the bottom, i.e., either of the form $\K A$, for which $W_B \models A$, or of the form $\no \K A$, for which $W_B \not\models A$.
Let such a set of literals form the set $ES_{T_U(\Pi)}(W_T)$.
(As mentioned, the subset of $Sl^T$ that consists of literals involved in constraints in $T_U(\Pi)$
	will form set $EC_{T_U(\Pi)}(W_T)$, and the remaining ones will form set $RQ_{T_U(\Pi)}(W_T)$.)
Therefore, we can conclude that $W$, which is a K15 world view, is indeed a candidate world view according to Definition\rif{cwwext}.

	\emph{Only if part.} Consider a candidate world view $W$ w.r.t.\ the K15 semantics, obtained by combining a subset $W_B$ of a K15 world view of $B_U(\Pi)$ with a K15 world view $W_T$ of $T_U(\Pi)$ after top-down influence. According to Definition\rif{cwwext}, the combination is possible only if for each epistemic literal $\K A \in ES_{T_U(\Pi)}(W_T)$, $W_B \models A$, and for each epistemic literal  $\no\K A \in ES_{T_U(\Pi)}(W_T)$, $W_B \not\models A$. If any such literal belongs to $EC_{T_U(\Pi)}(W_T)$, if this is not the case then there would be a constraint violation in $T_U(\Pi)$, so there would be no world views for $T_U(\Pi)$, and for the overall program $\Pi$. Considering a subjective literal in $RQ_{T_U(\Pi)}(W_T)$, if it were not the case that $W_B$ entails such literal, then by definition of K15 it would have been substituted by $\bot$, so $W_T$ would have been a different set. The top-down influence step can be disregarded since it performs in advance on elements of $ES_{T_U(\Pi)}(W_T)$, that are required to be entailed by $W_B$ anyway, the same transformation performed by K15, step~(ii).
Then, a candidate world view $W$ obtained according to Definition\rif{cwwext} is indeed a K15 world view.
\end{proof}

In \cite[Th.~2]{CabalarFC21} it is proved that, for any
semantics obeying epistemic splitting, an epistemically stratified program has a unique world
view. Actually, it can be seen that epistemically stratified programs admit one (and the same)
world view under any existing semantics, and in particular under those considered here: as it is well-known (see, e.g. \cite{Gelfond94,ShenE16,Costantini19}), multiple world views arise
in consequence of negative cycles involving epistemic literals, clearly not present in such programs. 
So, the unique world view of an epistemically stratified program is, in particular, a K15 world view. Thus, we have the following.

\begin{corollary}
Epistemically Stratified Programs satisfy both the Top-down and Bottom-up Epistemic Splitting Properties under any semantics.	
\end{corollary}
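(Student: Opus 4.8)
The plan is to lean on two facts recalled just above: (a) an epistemically stratified program $\Pi$ has a \emph{unique} world view $W$ under \emph{every} semantics $\cal{S}$, since multiple world views only arise from negative cycles through epistemic literals, which are absent by the stratification of Definition\rif{eps}; and (b) epistemic stratification is inherited by the pieces of any epistemic splitting. Concretely, I would fix an arbitrary epistemic splitting set $U$ of $\Pi$ and first run the routine check that $B_U(\Pi)$ is again epistemically stratified — the restriction of the mapping $\lambda$ to $\mathit{Atoms}(B_U(\Pi))$ still satisfies the two defining conditions — and likewise that the simplified top $E_U(\Pi,W_b)$ is epistemically stratified, where $W_b$ denotes the unique world view of $B_U(\Pi)$. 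Hence $\langle W_b,W_t\rangle$, with $W_t$ the unique world view of $E_U(\Pi,W_b)$, is the \emph{only} $\cal{S}$-solution of $\Pi$ w.r.t.\ $U$; iterating this down the strata handles the general multi-level case.

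The heart of the proof is to show $W_b\sqcup W_t=W$. Here I would argue that, because $\Pi$ is epistemically stratified, every subjective literal occurring in $\Pi$ has a truth value completely determined by the lower strata, and this value is the same whether it is computed in $W$ or recorded by the splitting apparatus ($E_U$ and the requisite set $ES_{T_U(\Pi)}(\cdot)$). Once all subjective literals are replaced by these fixed truth constants, the world-view computation of \emph{any} semantics collapses, stratum by stratum, to the ordinary stable-model computation on a program that is split by $U$ in the classical sense, so that the Splitting Theorem of \cite{Lif94} yields $W_b\sqcup W_t=W$. This gives both directions of the Epistemic Splitting Property on this class — the only solution produces the only world view, and the only world view is produced by a solution — and hence, by Theorem\rif{thm:tdespb}, of TDESPB as well.

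For the top-down property TDESP, I would run the analogous argument on the detached version $T'_U(\Pi)$ of Definition\rif{tuprime}. Its Interface World Views may be several (one per admissible assignment of truth values to the interface literals $F_U(\Pi)$), each carrying its own requisite set; but since those truth values are forced by the stratification, the unique $W_b$ fulfils $ES_{T_U(\Pi)}(W_T)$ for exactly one of them, $W_T$. Moreover top-down influence is vacuous here: for $\K L\in EC_{T_U(\Pi)}(W_T)$ the atom $L$ is entailed in $W_b$ anyway, so $B_U^{W_T}(\Pi)$ has the same unique world view as $B_U(\Pi)$. Therefore there is exactly one candidate world view in the sense of Definition\rif{cwwext}, and by the argument of the previous paragraph it equals $W$, which is indeed a world view; so every candidate world view is a world view, i.e.\ $\cal{S}$ satisfies TDESP on epistemically stratified programs. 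Since nothing above uses any feature of $\cal{S}$ beyond uniqueness of the world view on this class, the conclusion holds under any semantics.

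I expect the delicate point to be the ``freezing'' step — the claim that, on an epistemically stratified program, each semantics' fixpoint/reduct condition on world views coincides layer by layer with the stable-model condition, so that the classical Splitting Theorem applies. This is intuitively the very reason stratified programs have a unique world view everywhere, but making it precise \emph{uniformly} over ``any semantics'', without a common abstract definition to work from, takes care; I would handle it by induction on the strata given by $\lambda$, peeling off one layer at a time and appealing, at each step, to classical splitting for the objective part together with the already-fixed values of the subjective literals below. (Alternatively one could quote \cite[Th.~2]{CabalarFC21} for the bottom-up half and the preceding K15 theorem, plus the agreement of all semantics on stratified programs, for the top-down half; the uniform argument has the advantage of covering both at once.)
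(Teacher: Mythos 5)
Your proposal is correct in substance, but your primary line of argument is more elaborate and more self-contained than what the paper actually does. The paper dispatches the corollary in the few lines preceding it: it observes (as you do) that an epistemically stratified program has one and the same world view under every semantics, because multiple world views arise only from negative cycles through subjective literals, which stratification excludes; it then notes that this unique world view is in particular a K15 world view, so the immediately preceding theorem (K15 satisfies TDESP) yields the top-down half, while \cite[Th.~2]{CabalarFC21} together with the agreement of all semantics on this class yields the bottom-up half. That is exactly the route you relegate to your final parenthesis. Your main argument --- checking that the bottom and the simplified top inherit epistemic stratification, then ``freezing'' the subjective literals stratum by stratum so that the classical Splitting Theorem of \cite{Lif94} gives $W_b \sqcup W_t = W$ uniformly in the semantics, and finally verifying that top-down influence is vacuous so there is exactly one candidate world view --- is a genuinely different and arguably more informative proof, since it bypasses K15 entirely and treats both halves of the corollary at once. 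What it costs is the step you yourself flag as delicate: ``any semantics'' has no common abstract definition here, so the claim that every semantics' world-view condition collapses layer by layer to stable-model computation is precisely the folklore fact that the paper also leaves informal (it absorbs it into the cited, but not formally proved, assertion that all existing semantics agree on epistemically stratified programs). Neither your uniform argument nor the paper's shortcut makes that point fully rigorous, so on that score the two proofs are on equal footing; yours simply exposes where the rigor would have to be supplied.
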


\section{Conclusions}
\label{conclusions}

In this paper, we have provided a way of exploiting the splitting of Epistemic Logic Programs in a top-down fashion, adequate for those situations where the top part of a program is well-established as it represents a problem formulation, where the bottom part (representing a problem instance) may vary and is in general not known in advance. 

We defined formal conditions for the combination of world views of the top with world views of the bottom into world views of the overall program. In addition, potential world views of the top can be pre-computed, thus simplifying the combination with the world views of each problem instance. We provide a version that is the top-down declination of the well-established approach by Cabalar et al., and a more general version that is applicable to a wider range of semantic approaches.

A question that may arise concerns the efficiency of the top-down approach, even though in many cases it will be an almost inevitable choice. If the subjective literals ``connecting'' adjacent layers are in small numbers (as it seems reasonable), then efficiency might not be a concern. 

It remains to be seen in more depth for which kinds of applications
the different approaches to splitting (top-down and bottom-up) might be most profitably exploited. As an example, we can go back to the suggestion proposed in \cite{KahlWBG015} to encode the problem of finding a conformant plan as the task of obtaining a world view. As emphasized in \cite{CabalarFC21}, splitting allows one to separate the planner definition (the \as top'') from the generation of alternative plans (an intermediate layer, we might say \as the top of the bottom'') from, in turn, the domain description (the \as bottom''). The top-down perspective would allow one to analyze the top part independently from the other layers, so as to identify in advance the prerequisites it poses to them.

An investigation of which other semantics might satisfy the Top-down Epistemic Splitting Property is also a subject of future work.


\end{document}